\documentclass[lettersize,journal]{IEEEtran}

\usepackage{times}
\usepackage{algorithm,algpseudocode}
\usepackage{linegoal} 
\usepackage{booktabs}
\usepackage{graphicx}
\usepackage{subcaption}
\usepackage{wrapfig}
\usepackage{cite}
\usepackage[colorlinks,
linkcolor=red,
anchorcolor=green,
citecolor=blue
]{hyperref}   
\usepackage{amsmath,amsfonts,amsthm}
\usepackage{bm}
\usepackage{url}
\usepackage{thmtools}
\usepackage{longtable}
\usepackage{multirow}

\newcommand{\gTf}{\mathcal{T}_{\sharp}}
\title{Learning to Solve the Quadratic Assignment Problem with Warm-Started MCMC Finetuning}

\usepackage{enumitem}
\author{Yicheng Pan\IEEEauthorrefmark{1}, Ruisong Zhou\IEEEauthorrefmark{1}, Haijun Zou\IEEEauthorrefmark{1}, Tianyou Li, Zaiwen Wen
	\IEEEcompsocitemizethanks{This research was supported in part by National Key Research and Development Program of China under the grant number 2024YFA1012900, the National Natural Science Foundation of China under the grant numbers 12331010 and
		12288101, and the Natural Science Foundation of Beijing, China under the grant number Z230002.
		\textit{(Corresponding author: Haijun Zou.)}}
	\IEEEcompsocitemizethanks{Yicheng Pan and Ruisong Zhou are with the School of Mathematical Science, Peking University, Beijing 100871, China (email: panyicheng@stu.pku.edu.cn; ruisongzhou@stu.pku.edu.cn). }
	\IEEEcompsocitemizethanks{Haijun Zou is with the State Key Laboratory of Mathematical Sciences, Institute of
		Computational Mathematics and Scientific/Engineering Computing, Academy
		of Mathematics and Systems Science, Chinese Academy of Sciences, Beijing
		100190, China (email: zouhaijun24@mails.ucas.ac.cn).}
	\IEEEcompsocitemizethanks{Tianyou Li and Zaiwen Wen are with the Beijing International Center for Mathematical Research, Peking University, Beijing 100871, China (email: tianyouli@stu.pku.edu.cn; wenzw@pku.edu.cn).}
    \IEEEcompsocitemizethanks{\IEEEauthorrefmark{1}These authors contributed equally to this work.}
}

\begin{document}

	\maketitle	
	\begin{abstract}
		The quadratic assignment problem (QAP) is a fundamental NP-hard task that poses significant challenges for both traditional heuristics and modern learning-based solvers. Existing QAP solvers still struggle to achieve consistently competitive performance across structurally diverse real-world instances. To bridge this performance gap, we propose PLMA, an innovative permutation learning framework. PLMA features an efficient warm-started MCMC finetuning procedure to enhance deployment-time performance, leveraging short Markov chains to anchor the adaptation to the promising regions previously explored. For rapid exploration via MCMC over the permutation space, we design an additive energy-based model (EBM) that enables an $O(1)$-time 2-swap Metropolis-Hastings sampling step.  Moreover, the neural network used to parameterize the EBM incorporates a scalable and flexible cross-graph attention mechanism to model interactions between facilities and locations in the QAP. Extensive experiments demonstrate that PLMA consistently outperforms state-of-the-art baselines across various benchmarks. In particular, PLMA achieves a near-zero average optimality gap on QAPLIB, exhibits remarkably superior robustness on the notoriously difficult Taixxeyy instances, and also serves as an effective QAP solver in bandwidth minimization.
	\end{abstract}
	\begin{IEEEkeywords}
Combinatorial Optimization, Quadratic Assignment Problem, Machine Learning, Markov Chain Monte Carlo
\end{IEEEkeywords}
	\section{Introduction}
    \IEEEPARstart{I}{n} this paper, we consider the quadratic assignment problem (QAP), which seeks to assign $n$ facilities to $n$ locations to minimize the total cost, defined as the sum of products of flows between facilities and distances between their assigned locations. 
	 The QAP is one of the most fundamental combinatorial optimization (CO) problems with a wide range of real-world applications, such as facility location~\cite{elshafei1977facility_location},  graph matching~\cite{caetano2009graph_matching} and keyboard design~\cite{burkard1984keyboard_design}. Moreover, several other well-known CO problems can be formulated as QAPs. Typical examples include the traveling salesman problem, the maximal clique problem and the graph partitioning problem. The QAP also appears as subproblems in other CO problems, such as the bandwidth minimization (BM) problem. For more details, see~\cite{burkard1984keyboard_design, cela2013qap}
and references therein.
    
    Despite its broad applications, the QAP is not only NP-hard to solve but also NP-hard to approximate~\cite{sahni1976p}, rendering exact optimality practically intractable. In practice, exact solvers like branch-and-bound  and cutting plane methods struggle to solve instances of size larger than $n=20$ within a reasonable time budget~\cite{li1994lower_bound_qap}. By contrast, handcrafted heuristics can often find high-quality suboptimal solutions in acceptable computing time. Numerous meta-heuristics have been developed for the QAP, including  simulated annealing~\cite{burkard1984SA,connolly1990SA}, tabu search~\cite{Taillard1991,taillard1995search_qap} and memetic algorithms~\cite{merz2002maqap,BMA}. Among them, robust tabu search (Ro-TS)~\cite{Taillard1991} and the breakout memetic algorithm (BMA)~\cite{BMA} are widely recognized as particularly effective.  
    However, these methods are in general CPU-based, which limits their efficiency on large-scale instances, and they often require substantial instance-specific tuning.
	
	Machine learning offers a potential avenue to learn problem structure and reuse knowledge across instances while being able to leverage modern accelerators like GPUs  to improve the efficiency.  Recently, learning-based methods have achieved notable success on CO problems such as routing~\cite{kool2018attention, kwon2020POMO, wu2021learn-improvement-heuristics-VRP, li2025lmask} and scheduling problems~\cite{song2022flexible,jeon2023iclr,zhou2026wecan}. Much of this success leverages single-graph representations that align naturally with expressive neural architectures. The QAP, however, poses an additional challenge because each instance is specified by two coupled matrices, which can be viewed as two interacting graphs. Solving the QAP thus necessitates a sophisticated mechanism to fuse information across these two graphs. Existing neural methods typically approach this by either constructing an association graph with $n^2$ nodes~\cite{wang2021neural,liu2023RGM} or by concatenating facility and location embeddings via a complete permutation~\cite{tanlearning}. However, the former suffers from poor scalability, and the latter lacks generality as it assumes access to a full solution. 

    The effectiveness of both handcrafted heuristics and learning-based methods strongly depends on the instances to which they are applied. For handcrafted heuristics, performance is closely tied to the search landscape and neighborhood structure for test instances~\cite{BMA}.  For learning-based methods, the requirement of learning on in-distribution instances typically makes it hard to generalize well to target instances that can arise from arbitrary distributions. Indeed, real-world QAP benchmarks, such as the QAPLIB~\cite{burkard1997qaplib} and Taixxeyy instances~\cite{drezner2005taie}, comprise various instance families with significantly different structures. Therefore, no one algorithm demonstrates clear superiority in both efficiency and effectiveness across all types of QAP instances. Existing learning-based methods remain far from satisfactory, with average optimality gaps up to $37.82\%$ on QAPLIB~\cite{tanlearning}. At the same time, handcrafted heuristics often perform poorly and unstably on Taixxeyy instances, which are designed to be hard for local search algorithms~\cite{drezner2005taie}. 

These observations motivate a central question of this work. Can one develop a learning-based QAP solver that preserves the efficiency and transferability of learned priors while adapting effectively to test instances with previously unseen structure? To answer this question, we propose PLMA, a permutation learning framework built around an instance-adaptive Markov chain Monte Carlo (MCMC) finetuning mechanism. The key idea is to first learn general structural knowledge from a training distribution and then refine the pretrained model directly on  target test instances through the proposed efficient warm-started MCMC procedure. Our main contributions are summarized as follows.
	
	(1) We propose a two-stage learning paradigm for the QAP that unifies transferable pretraining with efficient deployment-time refinement.  The paradigm first pretrains a model on instances from a training distribution to acquire transferable structural priors, and then adapts it to  target test instances through a highly efficient batch-wise MCMC procedure. At the core of this finetuning stage is a warm-start mechanism that reuses previously obtained high-quality solutions to initialize customized short and locally interacting Markov chains, thereby steering the finetuning process toward promising regions of the search space.
	
	(2) We design an efficient and expressive probabilistic model for permutation learning in the QAP. The model is formulated as an energy-based model (EBM) tailored for MCMC sampling, whose additive structure enables $O(1)$-time evaluation of 2-swap proposals within a Metropolis-Hastings (MH) sampler. The neural network used to parameterize the model incorporates a simple yet effective cross-graph mechanism, which captures the coupled two-graph structure of the QAP in a scalable and flexible manner, without requiring a large association graph or a complete solution as input.
	
	(3) We demonstrate the state-of-the-art performance of PLMA through extensive experiments on various QAP benchmarks. Notably, PLMA achieves a near-zero average optimality gap on the canonical QAPLIB benchmark and exhibits clear superiority together with remarkable robustness on Taixxeyy instances. To the best of our knowledge, PLMA is the first learning-based QAP solver that surpasses leading heuristics in both solution quality and computational efficiency. In addition, we show that PLMA serves as an effective solver for QAP subproblems arising in the BM problem.

The remainder of this paper is structured as follows. A comprehensive review of the relevant literature is provided in Section~\ref{sec:related-work}. The mathematical formulations of the original QAP and the proposed learning problem are established in Section~\ref{sec:formulation}. Following this, the proposed PLMA framework is detailed in Section~\ref{sec:method}. The application of our solver to the BM problem is subsequently explored in Section~\ref{sec:application-BM}. Finally, extensive experimental validations across multiple benchmarks are presented in Section~\ref{sec:experiments} to substantiate the efficacy, efficiency, and robustness of PLMA.
	
    \section{Related Work}\label{sec:related-work}
	\subsection{Learning-Based Methods for the QAP}
	Learning-based methods for the QAP are relatively scarce in the literature. The majority of works focus on the application of QAP  to graph matching (GM) ~\cite{nowak2018revised, wang2019learning, wang2021neural, liu2023RGM}. To capture the quadratic property, a popular approach is to combine the two input graphs into an association graph, whose vertices are node pairs and edge weights are affinity scores. NGM \cite{wang2021neural} regards GM as a vertex classification task on the association graph and obtains the whole graph matching in one-shot. RGM \cite{liu2023RGM} constructs a graph matching by sequentially selecting vertices from the association graph.  Their reliance on the association graph, however, incurs significant computational cost. Towards directly solving the Koopmans-Beckmann's QAP and eliminating the need for an association graph, \cite{tanlearning} propose a solution-aware Transformer, which learns swap local search operators to iteratively refine initial complete solutions.

    \subsection{Deployment-Time Refinement}
	A variety of works to improve deployment-time performance of neural CO solvers have emerged in recent years~\cite{bello2016neural, li2023learning,Grinsztajn2023poppy,hottung2025polynet}. 
	A central paradigm is active search~\cite{bello2016neural}, where model parameters are finetuned on test instances. Variants include EAS~\cite{hottung2022efficient}, which updates only a subset of parameters or embeddings for efficiency, and DIMES~\cite{qiu2022dimes}, which employs meta-learning to provide favorable initialization for finetuning. These active search methods are generally built upon auto-regressive models, which factorize the overall probability into a product of conditional probabilities and construct solutions sequentially in an element-wise manner. However, this process requires restarting from scratch at each refinement iteration, hindering effective reuse of previously identified high-quality solutions. In contrast, our work introduces an EBM with MCMC sampling, enabling warm-starts from prior solutions and overcoming the restart limitation of auto-regressive active search.
	
	Although MCMC has been explored in MCPG \cite{chen2023MCPG}, our method differs in two key aspects. First, MCPG is designed for binary optimization problems. A direct extension to QAP would require modeling the permutation matrix with binary variables and enforcing permutation constraints via penalty terms. This substantially enlarges the search domain beyond the feasible set, making the generation of valid assignments inefficient. In contrast, our method operates directly over the permutation space, ensuring that every sampled solution is inherently feasible. Second, the policy in MCPG is parameterized as an instance-agnostic multivariate Bernoulli distribution, 
	hindering it from leveraging the transferable knowledge from pretraining. 
	Our policy is parameterized by an instance-conditioned neural network, allowing the learning of general structural knowledge.
	    
	\subsection{MCMC Methods for CO}
    MCMC methods offer a general probabilistic framework for CO, in which Markov chains are used to sample solutions from distributions that favor lower-cost states. Simulated annealing is a typical representative of this framework, which explores the solution space through MH transitions under a sequence of objective-induced EBMs with annealing temperatures. However, the random walk behavior of MH may lead to a vast number of sampling steps. Addressing this, some recent works have replaced the MH sampler with gradient-based samplers derived from Langevin dynamics (LD). Sun et al.~\cite{iSCO} first adapt LD techniques to tackle CO problems. Feng \& Yang ~\cite{RLSA} further propose a regularized approach on discrete LD, effectively escaping local minima. Our work builds on this broader line of research, yet takes a different and less explored perspective, namely, leveraging the warm-start property of MCMC to make multi-round finetuning on target instances more effective. Accordingly, rather than focusing on the design of state transitions within a Markov chain, we develop a short-chain sampling mechanism with local interactions across chains to keep successive rounds concentrated near previously identified high-quality solutions.

	\section{Problem Formulations}\label{sec:formulation}
	\subsection{Quadratic Assignment Problem}
	In this paper, we consider the Koopmans-Beckmann's QAP, which involves the optimal assignment of $n$ facilities to $n$ locations. A QAP instance $\mathcal{P}$ is specified by two $n\times n$ matrices with real elements $\displaystyle \mathbf{F}=(F_{ij}), \displaystyle \mathbf{D}=(D_{kl})$, where $F_{ij}$ is the flow  between facilities $i$ and $j$ and $D_{kl}$ is the  distance between locations $k$ and $l$. For any positive integer $n$, we denote the set $\{1,\dots,n\}$ by $[n]$. Let $\Pi_n$ represent the set of all permutations over $[n]$. A permutation $\pi\in\Pi_n$ maps each facility $i$ to location $\pi(i)$. The QAP can be formulated as 
	\begin{equation}
		\label{eq:qap-pi}
		\min_{\pi\in \Pi_n} f(\pi;\displaystyle \mathcal{P})\;:=\;\sum_{i=1}^n\sum_{j=1}^n F_{ij}D_{\pi(i)\pi(j)}.
	\end{equation}
	
	Alternatively, the QAP can be expressed equivalently in matrix form
	\begin{equation*}
		\label{eq:qap-matrix}
		\begin{aligned}
			\min_{X\in\{0,1\}^{n\times n}} \quad &\langle F,XDX^{\rm T}\rangle,\quad \mathrm{s.t.} \quad  X\bm{1}=\bm{1},\,  
			X^{\rm T}\bm{1} = \bm{1}.
		\end{aligned}
	\end{equation*}
	Here, the linear constraints restrict the binary matrix $X$ to the set of permutation matrices. For each $\pi\in\Pi_n$, its corresponding permutation matrix $X_{\pi}$ is defined by $(X_{\pi})_{i,\pi(i)}=1$ and zero otherwise. Then the equivalence to \eqref{eq:qap-pi} is established by the identity $f(\pi,\mathcal{P})=\langle F,X_\pi D X^{\rm T}_\pi\rangle$.

	\subsection{Probabilistic Modeling Perspective}
	Let $\Delta(\Pi_n)$ denote the probabilistic simplex over the permutation space $\Pi_n$, defined by
	\begin{equation*}
		\Delta(\Pi_n):=\{p:\Pi_n\to \mathbb R_+ | \; \sum_{\pi\in\Pi_n}p(\pi)=1\}.
	\end{equation*}
	We then consider the following optimization problem:
	\begin{equation}\label{eq:prob-form}
			\min_{p\in\Delta(\Pi_n)}\quad  \mathbb{E}_{\pi\sim p}\left[f(\pi;\mathcal{P})\right]\;:=\;\sum_{\pi\in\Pi_n}f(\pi;\mathcal{P})p(\pi\mid \mathcal{P}).
	\end{equation}
	This probabilistic formulation is provably equivalent to the original deterministic model (\ref{eq:qap-pi}) of the QAP, as formalized in Theorem~\ref{thm:eq}.
	\begin{restatable}{theorem}{exactequivalence}
		\label{thm:eq}
		Let $v^\star=\min\limits_{\pi\in\Pi_n} f(\pi;\mathcal{P})$ and $w^\star=\min\limits_{p\in\Delta(\Pi_n)}\mathbb{E}_{p}[f(\pi;\mathcal{P})]$. Then $w^\star=v^\star$. Moreover, the minimizers of (\ref{eq:prob-form}) are precisely the distributions supported on the optimal solution set of (\ref{eq:qap-pi}).
	\end{restatable}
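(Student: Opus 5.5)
The argument is a standard linear-programming-over-a-simplex argument: the objective in (\ref{eq:prob-form}) is linear in $p$, the feasible set $\Delta(\Pi_n)$ is a finite-dimensional simplex, and its vertices are the point masses $\delta_\pi$, $\pi\in\Pi_n$. Since $\Pi_n$ is finite ($|\Pi_n|=n!$), both minima in the statement are attained. I will prove the two inequalities $w^\star\le v^\star$ and $w^\star\ge v^\star$ separately, and then read off the characterization of minimizers from the slack in the second inequality.

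\textbf{Step 1: $w^\star\le v^\star$.} Let $\pi^\star$ be an optimal permutation for (\ref{eq:qap-pi}). The Dirac distribution $\delta_{\pi^\star}$ lies in $\Delta(\Pi_n)$. Its expected cost is $\mathbb{E}_{\delta_{\pi^\star}}[f]=f(\pi^\star;\mathcal{P})=v^\star$, so $w^\star\le v^\star$.

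\textbf{Step 2: $w^\star\ge v^\star$.} Take any $p\in\Delta(\Pi_n)$. Since $f(\pi;\mathcal{P})\ge v^\star$ for every $\pi$ and $p\ge 0$ with $\sum_\pi p(\pi)=1$,
\begin{equation*}
\mathbb{E}_p[f]-v^\star=\sum_{\pi\in\Pi_n}p(\pi)\bigl(f(\pi;\mathcal{P})-v^\star\bigr)\ge 0.
\end{equation*}
Taking the minimum over $p$ (attained, since $\Delta(\Pi_n)$ is compact and the objective is continuous) gives $w^\star\ge v^\star$. Together with Step 1, $w^\star=v^\star$.

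\textbf{Step 3: characterizing the minimizers.} Let $S^\star=\{\pi: f(\pi;\mathcal{P})=v^\star\}$. By Steps 1--2, $p$ is a minimizer iff the nonnegative sum in the display above vanishes. A sum of nonnegative terms vanishes iff each term does, i.e.\ $p(\pi)\bigl(f(\pi)-v^\star\bigr)=0$ for all $\pi$. For $\pi\notin S^\star$ we have $f(\pi)-v^\star>0$, so this forces $p(\pi)=0$. Hence $p$ is supported on $S^\star$. Conversely, if $p$ is supported on $S^\star$, then every term vanishes, so $\mathbb{E}_p[f]=v^\star=w^\star$ and $p$ is a minimizer.

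There is no real obstacle here. The only point needing care is the strict positivity $f(\pi)-v^\star>0$ off $S^\star$, which holds by the definition of $S^\star$, and this is what makes the characterization exact rather than merely an inclusion.
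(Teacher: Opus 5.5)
Your proof is correct and follows the same route as the paper: a lower bound from $\mathbb{E}_p[f]\ge v^\star$ for every $p\in\Delta(\Pi_n)$, and an upper bound from the Dirac mass $\delta_{\pi^\star}$. Your Step 3 makes rigorous the paper's one-line appeal to ``linearity on the simplex'': you note that $\sum_{\pi}p(\pi)\bigl(f(\pi;\mathcal{P})-v^\star\bigr)$ is a sum of nonnegative terms, and it vanishes exactly when $p$ puts no mass outside the optimal set.
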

	\begin{proof}
		For any $p\in\Delta(\Pi_n)$,
		$\mathbb{E}_{p}[f]\ge \min_{\pi} f(\pi)=v^\star$, hence $w^\star\ge v^\star$.
		If $\pi^\star\in\arg\min_{\pi} f(\pi)$ and $p=\delta_{\pi^\star}$, then $\mathbb{E}_{p}[f]=f(\pi^\star)=v^\star$, hence $w^\star\le v^\star$. Thus $w^\star=v^\star$. Linearity on the simplex implies optima occur at extreme points or their convex combinations over the minimizer set.
	\end{proof}
	
	It is computationally intractable to optimize directly over the entire probability simplex due to the factorial growth of its dimension. In practice, we would replace the whole probabilistic space with a family of probabilistic models $\{p_{\theta}(\cdot\mid\mathcal{P})\mid \theta\in\mathbb{R}^d\}$ parametrized by a neural network with parameters $\theta$.  To allow learning transferable distribution knowledge, this network is instance-conditioned, which takes a problem instance $\mathcal{P}$ as input and generates a conditional probabilistic distribution $p_{\theta}(\cdot\mid\mathcal{P})$.  Training this network aims to minimize the expected cost over a distribution of instances $\Gamma$, leading to the learning problem
	\begin{equation*}
    \min_{\theta\in\mathbb{R}^d}\quad \mathbb{E}_{\mathcal{P}\sim \Gamma}\mathbb{E}_{\pi\sim p_{\theta}(\cdot\mid \mathcal{P})}\left[f(\pi;\mathcal{P})\right].
	\end{equation*}
	
	Directly minimizing this expectation is difficult because the objective landscape is typically rugged and riddled with poor local minima. To address this, we push the entire distribution $p_{\theta}$ forward to yield a flatter distribution. This is realized by a local improvement map $\mathcal{T}:\Pi_n\to\Pi_n$ based on the 2-swap neighborhood. This map is a common improvement strategy for many combinatorial optimization problems, including the QAP. Denote 2-swap action as an unordered pair $a=(r,s)$ with $r \neq s$, which transforms a permutation $\pi$ into a new permutation $\pi'$ by exchanging the assignments at positions $r$ and $s$:
	\begin{align*}
		\pi'(i) &= \pi(i), \quad \forall i \notin \{r,s\}, \\
		\pi'(r) &= \pi(s), \\
		\pi'(s) &= \pi(r).
	\end{align*}
	A key advantage of the 2-swap is that the change in the objective function, $\Delta(\pi, a) = f(\pi') - f(\pi)$, can be calculated efficiently without re-evaluating the entire summation. The update can be computed in $O(n)$ time as follows:
	\begin{align*}
		\Delta(\pi, a)
		= & \;(F_{rr} - F_{ss})(D_{\pi(s)\pi(s)} - D_{\pi(r)\pi(r)}) \\
		& +\;(F_{rs} - F_{sr})(D_{\pi(s)\pi(r)} - D_{\pi(r)\pi(s)}) \\
		& +\;\sum_{k \notin \{r,s\}} (F_{rk} - F_{sk})(D_{\pi(s)\pi(k)} - D_{\pi(r)\pi(k)})\\
		&+\;\sum_{k \notin \{r,s\}}  (F_{kr} - F_{ks})(D_{\pi(k)\pi(s)} - D_{\pi(k)\pi(r)}).
	\end{align*}
	The local improvement map operates by sampling a batch of potential 2-swap actions in parallel at each iteration. It then identifies the single best action from this batch—the one that yields the greatest decrease in cost. If this best action results in an improved solution, the permutation $\pi$ is updated.
	This local improvement map is detailed in Algorithm \ref{alg:local_search}.
	
	\begin{algorithm}[!htbp]
		\caption{Local improvement map for QAP}
		\label{alg:local_search}
		\begin{algorithmic}[1]
			\State \textbf{Input:} Initial solution $\pi$, number of local search iterations $T_{\text{LS}}$, number of 2-swap candidates per iteration $K_{\text{LS}}$.
			
			\For{$t=1, \dots, T_{\text{LS}}$}
			\State Sample a set of $K_{\text{LS}}$ random 2-swap actions in parallel: $A = \{a_1, a_2, \dots, a_{K_{\text{LS}}}\}$.
			\State Calculate $\Delta_i = \Delta(\pi, a_i)$ for all $a_i \in A$.
			\State Find the best action: $a^* = \arg\min_{a_i \in A} \Delta_i$.
			\State Let $\Delta^* = \Delta(\pi, a^*)$.
			\State \textbf{if} $\Delta^* < 0$ \textbf{then}
			\State Update $\pi \leftarrow \pi'$, where $\pi'$ is the result of applying action $a^*$ to $\pi$.
			\State \textbf{end if}
			\EndFor
			\State \textbf{Return:} The refined solution $\pi$.
		\end{algorithmic}
	\end{algorithm}
	
	Given the local improvement map $\mathcal T$, the pushforward distribution $\gTf p_{\theta}$ is then defined as $ (\gTf p_{\theta})(\sigma):=\sum_{\pi\in\mathcal{T}^{-1}(\sigma)}p_{\theta}(\pi)$, which aggregates the probability mass of all permutations $\pi$ that are mapped by $\mathcal T$ to the same improved solution $\sigma=\mathcal T(\pi)$. In this way, it creates a flatter distribution supported on the set of improved solutions, reducing the risk of getting trapped in poor local optima.  After applying the pushforward transformation, the learning problem becomes \begin{equation}
		\label{eq:pushforward-obj}
		\begin{aligned}
			\min_{\theta\in\mathbb{R}^d}\quad  \mathcal{L}(\theta):=&\mathbb E_{\mathcal{P} \sim\Gamma} \mathbb{E}_{\sigma\sim \gTf p_{\theta}(\cdot\mid \mathcal{P})}\left[f(\sigma;\mathcal{P})\right]\\
			=& \mathbb{E}_{\mathcal{P}\sim \Gamma}\mathbb{E}_{\pi\sim p_{\theta}(\cdot\mid\mathcal{P})}\left[f(\mathcal{T}(\pi);\mathcal{P})\right].
		\end{aligned}
	\end{equation}
	The second equality above, which follows from the change-of-variables formula for pushforward measures, is pivotal to our method. It shows that the underlying sampler $p_{\theta}$ is trained to place probability mass on regions that yield high-quality solutions after local improvement. At the same time, it provides a practical way to optimize $\mathcal L(\theta)$: one can sample from the original, easy-to-sample distribution $p_\theta$ and then evaluate the improved cost $f(\mathcal T(\pi);\mathcal P)$. 
	
	\section{PLMA Framework} \label{sec:method}

    In this section, we present the details of PLMA, our proposed learning framework for the QAP. At a high level, PLMA approaches QAP solving by learning an optimal probabilistic model over the permutation space. Specifically, the model is formulated as an EBM whose energy is induced by a heatmap and admits an additive structure, enabling efficient solution sampling via MCMC. The heatmap is generated by a scalable cross-graph attention network that encodes QAP instance features and captures latent relationships between disconnected nodes. The resulting model is pretrained on a problem distribution under the pushforward learning formulation introduced in Section~\ref{sec:formulation}, so as to learn transferable structural knowledge of the QAP. Finally, it is refined via a warm-started MCMC finetuning stage with short Markov chains to improve adaptation to target instances whose structures may differ substantially from those seen during training.

	\subsection{Learning with an Efficient Energy-Based Model}
	
	Given a heatmap from the network, existing works~\cite{wang2019learning, wang2021neural} often project it into permutations using the non-differentiable Hungarian algorithm, which limits diversity and optimization. To circumvent these issues, we introduce a subtly designed EBM that is computationally tailored and supports efficient exploration of the permutation space via Metropolis–Hastings (MH) sampling. The relationship between a permutation matrix $X_{\pi}$ and a heatmap $\phi=\phi(\theta,\mathcal P)\in\mathbb{R}^{n\times n}$ can be modeled using an EBM, derived from a distance metric as follows:
	\begin{equation}
		\label{eq:ebm}
		\begin{aligned}
			p_\theta(\pi\mid \mathcal{P}) &\propto \exp\left(-\frac{1}{2}\|X_\pi-\phi\|^2_{F}\right)\\
			&= \exp\left(\langle X_\pi, \phi\rangle-\frac{1}{2}\|X_\pi\|^2_{F}-\frac{1}{2}\|\phi\|^2_{F}\right)\\
			& \propto \exp \left(\langle X_\pi, \phi\rangle\right)=:\exp\left(\Phi_\theta(\pi)\right).
		\end{aligned}
	\end{equation}
	Here the second proportionality holds because $\|X_\pi\|^2_{F}$ is a constant value of $n$ and $\|\phi\|^2_{F}$ is independent of the random variable $\pi$.  These terms are thus absorbed into the normalization constant. The resulting additive score, defined as $\Phi_\theta(\pi):=\langle X_\pi, \phi\rangle=\sum_{i=1}^n\phi_{i,\pi(i)}$, is the key to our approach. This additive design is deliberate, as it allows efficient evaluation of score differences when exploring the space of permutations.
	
	To sample from this distribution, which is known only up to the intractable $Z_\theta$, we employ the MH algorithm with proposal kernel $\kappa(\cdot\mid\pi)$ and acceptance probability
	\begin{equation}
		\label{eq:mh-general}
		\alpha(\pi\to\pi') \;=\; \min\!\left\{1,\,
		\frac{p_\theta(\pi')\,\kappa(\pi\mid\pi')}
		{p_\theta(\pi)\,\kappa(\pi'\mid\pi)}\right\}.
	\end{equation}
	We adopt a symmetric 2-swap proposal that uniformly selects two positions $a\neq b$ and swaps $\pi(a)$ with $\pi(b)$. The proposal's symmetry $\kappa(\pi\mid\pi')=\kappa(\pi'\mid\pi)$ simplifies the Hastings ratio in (\ref{eq:mh-general}) to
	\begin{equation*}
		\label{eq:mh-symmetric}
		\exp(\Phi_\theta(\pi')-\Phi_\theta(\pi))=\exp\!\big(\phi_{a,\pi(b)}+\phi_{b,\pi(a)}-\phi_{a,\pi(a)}-\phi_{b,\pi(b)}\big),
	\end{equation*}
	which can be computed in constant time. Thus, each update has $O(1)$ cost, and the stationary distribution of the chain remains $p_\theta(\pi)$ with the normalization constant canceled. This leads to a remarkably efficient MCMC sampling scheme.
	
	Although the exact log-likelihood $\log p_\theta(\pi)$ is not available due to the intractable $Z_\theta$, we can still estimate its gradient without computing  $Z_\theta$. In particular, an unbiased and consistent Monte Carlo estimator enables the use of policy-gradient optimization. This is formalized in Theorem~\ref{thm:unbiased-gradient-estimator}, with the details of proof provided in supplementary materials.

	\begin{restatable}{theorem}{gradient}\label{thm:unbiased-gradient-estimator}
		Let $\pi_1,...,\pi_N$ be i.i.d. samples from $p_\theta$ in (\ref{eq:ebm}). Then for any function $g:\Pi_n\to\mathbb{R}$, the gradient of the expectation $\mathbb{E}_{\pi\sim p_\theta}\left[g(\pi)\right]$ admits an unbiased and consistent estimator 
		\begin{equation}
			\label{eq:gd-est}
			\hat{G}_N \;:=\frac{1}{N-1}\sum_{i=1}^N \left(g(\pi_i)-\frac{1}{N}\sum_{i=1}^Ng(\pi_i)\right)\nabla_\theta\Phi_\theta(\pi_i).
		\end{equation}
	\end{restatable}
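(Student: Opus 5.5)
The plan is to start from the score-function identity for the normalized EBM. Write $p_\theta(\pi)=\exp(\Phi_\theta(\pi))/Z_\theta$ with $Z_\theta=\sum_{\sigma\in\Pi_n}\exp(\Phi_\theta(\sigma))$. This is a finite sum of smooth functions of $\theta$, assuming the heatmap network $\phi(\theta,\mathcal P)$ is differentiable, so there are no interchange-of-limit issues. Differentiating $\log p_\theta$ gives
\begin{equation*}
\nabla_\theta\log p_\theta(\pi)=\nabla_\theta\Phi_\theta(\pi)-\mathbb{E}_{\sigma\sim p_\theta}\left[\nabla_\theta\Phi_\theta(\sigma)\right].
\end{equation*}
Combining this with $\nabla_\theta\mathbb{E}_{p_\theta}[g]=\sum_\pi g(\pi)\nabla_\theta p_\theta(\pi)=\mathbb{E}_{p_\theta}[g\,\nabla_\theta\log p_\theta]$ yields the key identity
\begin{equation*}
\nabla_\theta\mathbb{E}_{p_\theta}[g]=\mathbb{E}[g\,\nabla\Phi]-\mathbb{E}[g]\,\mathbb{E}[\nabla\Phi]=\operatorname{Cov}_{p_\theta}\!\left(g(\pi),\nabla_\theta\Phi_\theta(\pi)\right).
\end{equation*}
Here the covariance is taken componentwise in $\theta$. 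The intractable $Z_\theta$ disappears because it only enters through the centering term.

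With this identity, the task reduces to recognizing $\hat G_N$ as the standard unbiased sample covariance of the pairs $(g(\pi_i),\nabla_\theta\Phi_\theta(\pi_i))$. Write $\bar g=\frac1N\sum_i g(\pi_i)$ and $\bar\Phi'=\frac1N\sum_i\nabla\Phi(\pi_i)$. Since $\sum_i(g(\pi_i)-\bar g)=0$, we have
\begin{equation*}
\sum_i(g(\pi_i)-\bar g)\nabla\Phi(\pi_i)=\sum_i(g(\pi_i)-\bar g)(\nabla\Phi(\pi_i)-\bar\Phi').
\end{equation*}
So $\hat G_N$ is exactly the Bessel-corrected sample covariance.

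For unbiasedness I will expand the sum as $\sum_i g_i\Phi'_i-\frac1N\sum_{i,j}g_i\Phi'_j$ and take expectations using independence. The diagonal terms contribute $\mathbb{E}[g\Phi']$ and the off-diagonal terms contribute $\mathbb{E}[g]\mathbb{E}[\Phi']$, which gives
\begin{equation*}
N\,\mathbb{E}[g\Phi']-\tfrac1N\bigl(N\,\mathbb{E}[g\Phi']+N(N-1)\,\mathbb{E}[g]\,\mathbb{E}[\Phi']\bigr)=(N-1)\operatorname{Cov}(g,\nabla\Phi).
\end{equation*}
Dividing by $N-1$ gives unbiasedness. This counting of diagonal versus off-diagonal terms, which justifies the factor $1/(N-1)$, is the only step needing care. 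I expect it to be the main (though mild) obstacle.

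For consistency I will use the strong law of large numbers. All quantities are bounded because $\Pi_n$ is finite, so $\frac1N\sum_i g_i\Phi'_i\to\mathbb{E}[g\Phi']$, $\bar g\to\mathbb{E}[g]$ and $\bar\Phi'\to\mathbb{E}[\Phi']$ almost surely. Also $\hat G_N=\frac{N}{N-1}\bigl(\frac1N\sum_i g_i\Phi'_i-\bar g\,\bar\Phi'\bigr)$, so by the continuous mapping theorem and $N/(N-1)\to1$, $\hat G_N\to\operatorname{Cov}(g,\nabla\Phi)=\nabla_\theta\mathbb{E}_{p_\theta}[g]$ almost surely.
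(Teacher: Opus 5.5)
Your proposal is correct and follows essentially the same route as the paper's proof. You both use the score-function identity with $\nabla_\theta\log Z_\theta=\mathbb{E}_{p_\theta}[\nabla_\theta\Phi_\theta]$ to reach the covariance form, split the expectation into diagonal and off-diagonal terms via independence for unbiasedness, and apply the SLLN to the rewritten form $\frac{N}{N-1}\bigl(\overline{g\nabla\Phi}-\bar g\,\overline{\nabla\Phi}\bigr)$ for consistency; your observation that $\hat G_N$ is exactly the Bessel-corrected sample covariance is a helpful framing the paper leaves implicit, but it does not change the argument.
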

	
	\begin{proof}[Proof sketch]
		Recall $p_\theta(\pi)\propto \exp(\Phi_\theta(\pi))$ over the finite set $\Pi_n$.
		Using the score-function identity,
		\begin{align}
			\nabla_\theta \mathbb E_{p_\theta}[g(\pi)]
			&=\mathbb E_{p_\theta}\!\big[g(\pi)\nabla_\theta\log p_\theta(\pi)\big]. \label{eq:sketch-score}
		\end{align}
		Moreover, $\log p_\theta(\pi)=\Phi_\theta(\pi)-\log Z_\theta$ and
		\begin{align*}
			\nabla_\theta \log Z_\theta
			=\frac{1}{Z_\theta}\sum_{\hat\pi}\exp(\Phi_\theta(\hat\pi))\nabla_\theta\Phi_\theta(\hat\pi)
			=\mathbb E_{p_\theta}[\nabla_\theta\Phi_\theta(\pi)].
		\end{align*}
		which turns \eqref{eq:sketch-score} into the covariance form
		\begin{align}
			\nabla_\theta \mathbb E_{p_\theta}[g(\pi)]
			=\mathbb E_{p_\theta}\!\Big[(g(\pi)-\mathbb E_{p_\theta}[g])\,\nabla_\theta\Phi_\theta(\pi)\Big].
			\label{eq:sketch-cov}
		\end{align}
		
		Let $\pi_1,\ldots,\pi_N\overset{\text{i.i.d.}}{\sim}p_\theta$ and denote
		$\bar g=\frac1N\sum_i g(\pi_i)$,
		$\overline{\nabla\Phi}=\frac1N\sum_i \nabla_\theta\Phi_\theta(\pi_i)$, and
		$\overline{g\nabla\Phi}=\frac1N\sum_i g(\pi_i)\nabla_\theta\Phi_\theta(\pi_i)$.
		Then the proposed estimator can be rewritten as
		\begin{align}
			\hat G_N=\frac{N}{N-1}\Big(\overline{g\nabla\Phi}-\bar g\,\overline{\nabla\Phi}\Big).
			\label{eq:sketch-est}
		\end{align}
		(Unbiasedness) By independence,
		\begin{align}
			\mathbb E[\bar g\,\overline{\nabla\Phi}]
			&=\frac{1}{N^2}\sum_{i=1}^N \mathbb E[g_i\nabla\Phi_i]
			+\frac{1}{N^2}\!\!\sum_{i\neq j}\!\!\mathbb E[g_i]\mathbb E[\nabla\Phi_j]  \notag\\
			&=\frac{1}{N}\mathbb E[g\nabla\Phi]+\frac{N-1}{N}\mathbb E[g]\mathbb E[\nabla\Phi], \label{eq:sketch-prodmean}
		\end{align}
		while $\mathbb E[\overline{g\nabla\Phi}]=\mathbb E[g\nabla\Phi]$.
		Plugging \eqref{eq:sketch-prodmean} into \eqref{eq:sketch-est} yields
		$\mathbb E[\hat G_N]=\mathbb E[g\nabla\Phi]-\mathbb E[g]\mathbb E[\nabla\Phi]$,
		which matches \eqref{eq:sketch-cov}.
		
		(Consistency) Since $\Pi_n$ is finite, the involved random variables are bounded;
		thus by the SLLN, $\bar g$, $\overline{\nabla\Phi}$ and $\overline{g\nabla\Phi}$
		converge a.s. to their expectations, and $\frac{N}{N-1}\to 1$.
		Therefore $\hat G_N \to \nabla_\theta \mathbb E_{p_\theta}[g(\pi)]$ almost surely.
	\end{proof}
	
	\subsection{Cross-Graph Attention Network}
	
	A QAP instance can be represented as two weighted graphs, one for locations and one for facilities. To maintain scalability, we encode each graph separately with edge-based GNNs, embedding the $n^2$ pairwise information into edges and aggregating it into $n$ node embeddings. A cross-graph attention module then captures interactions between the two graphs, and the resulting embeddings are combined through a dot product and Sinkhorn normalization to form a heatmap $\phi(\theta,\mathcal P)$. 
	
 Initial node features are set by learnable parameters $\mathbf{h}_{\text{ini}}\in \mathbb R^{d_{in}}$ as $\displaystyle \mathbf{X}^0 = \bm{1}\mathbf{h}_{\text{ini}}^{\rm T}\in \mathbb{R}^{2n\times d_{in}}$, which are then projected to the initial embeddings $\displaystyle \mathbf{H}^{(0)}=\displaystyle \mathbf{X}^0\displaystyle \mathbf{W}_{\mathrm{proj}}\in\mathbb{R}^{2n\times d}$. These embeddings are subsequently updated through $l_1$ message passing layers. Each layer updates the node embeddings $\displaystyle \mathbf{H}$ based on the input graph $\displaystyle \mathbf{D}$ and $\displaystyle \mathbf{F}$ by combining a standard graph convolution network (GCN) operation with a direct algebraic operation on the weight matrix:
	\begin{equation}
		\label{eq:gcn}
		\begin{bmatrix}
			{\displaystyle \mathbf{H}}_D^{(l)}\\  {\displaystyle \mathbf{H}}_F^{(l)}
		\end{bmatrix} = \sigma\left[\begin{bmatrix}
			\displaystyle \mathbf{D} - \bar {\displaystyle \mathbf{D}}& \\
			& \displaystyle \mathbf{F} - \bar {\displaystyle \mathbf{F}}
		\end{bmatrix} \begin{bmatrix}
			\displaystyle \mathbf{H}_D^{(l-1)} \displaystyle \mathbf{W}_{\text{alg,D}}^l \\
			\displaystyle \mathbf{H}_F^{(l-1)} \displaystyle \mathbf{W}_{\text{alg,F}}^l
		\end{bmatrix}\right].
	\end{equation}
	Here, $\bar{\mathbf{D}} = \bar d \bm{1}\bm{1}^{\rm T}$ and $\bar{\mathbf{F}} = \bar f \bm{1}\bm{1}^{\rm T}$, and $\bar {d}$, $\bar{f}$ are the averages of elements of $\mathbf{D}$ and $\mathbf{F}$. The layers are stacked with residual connections and layer normalization. By incorporating the edge information into the weighted convolutions, the original $2 n^2$ pair information is implicitly transferred into the $2n$ node embeddings while retaining the scalability. 
	
To break the disconnectedness of two graphs, a cross-attention module updates the embeddings. This module is analogous to a standard transformer encoder block. It allows each set of embeddings to be updated based on information from the other, producing contextually aware final embeddings. The initial embeddings are set as the GCN output: $\tilde{\mathbf{H}}_D^{(0)} = {\mathbf{H}}_D^{(l_1)}, \tilde{\mathbf{H}}_F^{(0)} = {\mathbf{H}}_F^{(l_1)}$. In the $r$-th of the total $l_2$ layers, the attention scores $e^{r}_{D,ij}$ and $e^{r}_{F,ij}$ are calculated:
	\begin{equation*}
		\begin{aligned}
			&a_{D,ij}^{(r)} = (\tilde{\mathbf{h}}_{D,i}^{r-1}\mathbf{W}_{D,q})(\tilde{\mathbf{h}}_{F,j}^{r-1}\mathbf{W}_{D,k})^{\rm T},\\
			& a_{F,ij}^{(r)} = (\tilde{\mathbf{h}}_{F,i}^{r-1}\mathbf{W}_{F,q})(\tilde{\mathbf{h}}_{D,j}^{r-1}\mathbf{W}_{F,k})^{\rm T},\\
			&e_{D,ij}^{r} = \frac{\exp(a_{D,ij}^{(r)})}{\sum_{k=1}^n \exp(a_{D,ik}^{(r)})},
			\\
			&e_{F,ij}^{r} = \frac{\exp(a_{F,ij}^{(r)})}{\sum_{k=1}^n \exp(a_{F,ik}^{(r)})},
		\end{aligned}
	\end{equation*}
	where $\tilde{\mathbf{h}}_{D,i}^{r-1}$ and $\tilde{\mathbf{h}}_{F,i}
	^{r-1}$ are the $i$-th rows of $\tilde {\displaystyle \mathbf{H}}_D^{(r-1)}$ and $\tilde {\displaystyle \mathbf{H}}_F^{(r-1)}$. Given the score matrices ${\displaystyle \mathbf{E}}_D^r=(e_{D,ij}^r)$ and ${\displaystyle \mathbf{E}}_F^r=(e_{F,ij}^r)$, the embeddings are updated via  aggregating values weighted by attention scores:
	\begin{equation}
		\label{eq:att}
		\begin{bmatrix}
			\tilde{\displaystyle \mathbf{H}}_D^{(r)}\\  \tilde{\displaystyle \mathbf{H}}_F^{(r)}
		\end{bmatrix} = \tilde {\displaystyle \mathbf{H}}^{(r-1)}+\text{MLP}^{r}\left[\begin{bmatrix}
			& {\displaystyle \mathbf{E}}_D^r\\
			{\displaystyle \mathbf{E}}_F^r & 
		\end{bmatrix}  
		\begin{bmatrix}
			\tilde{\displaystyle \mathbf{H}}_D^{(r-1)} \displaystyle \mathbf{W}_{D,k}^r\\
			\tilde{\displaystyle \mathbf{H}}_F^{(r-1)} \displaystyle \mathbf{W}_{F,k}^r
		\end{bmatrix}\right].
	\end{equation}
		Note that to avoid extending the node space to $n^2$, the feature is compressed into the $2n$ nodes and the matrices $\mathbf{D}$ and $\mathbf{F}$ are inserted into the block-diagonal pattern in~\eqref{eq:gcn}, leaving the unknown block off-diagonal relationship empty. Therefore, the attention scores $\mathbf{E}_D$ and $\mathbf{E}_F$ are calculated to reflect the dynamic relationship for the unknown pairs and fill in the block off-diagonal pattern in~\eqref{eq:att}. By leveraging both the block diagonal and off-diagonal relationships, the original two matrices $\mathbf{D}$ and $\mathbf{F}$ are embedded into two sets of node embeddings and their complex relationship is reflected.

	 Finally, the network generates a heatmap over facility-location pairs. Concretely, the heatmap is constructed via the dot product between facility and location  embeddings, followed by element-wise $\tanh$ clipping and a Sinkhorn normalization in the log-domain:
	\begin{equation*}
		\phi(\theta, \mathcal{P}) = \text{log-Sinkhorn}\left[C\tanh\left(\frac{\tilde{\displaystyle \mathbf{H}}_F^{(l_2)}(\tilde{\displaystyle \mathbf{H}}_D^{(l_2)})^{\rm T}}{\sqrt{d}}\right)\right],
	\end{equation*}
	where $C>0$ is a clipping constant. The $\tanh$ clipping here increases nonlinearity while preventing extreme logits. The log-Sinkhorn operator pushes the heatmap toward a doubly stochastic matrix in log-domain. This structure actually yields a low-rank approximation of some permutation matrix in the log-domain, as mentioned in~\cite{DrogeLBNH023lowrank}. Ideally we hope this approximated permutation matrix corresponds to an optimal solution to the problem in the log-domain.

	\subsection{Pre-Training with Pushforward Transformation}
	\label{subsec:local_search}
	
	We pretrain the probabilistic model by minimizing the pushforward loss~\eqref{eq:pushforward-obj},
	i.e., $\mathbb{E}_{P\sim\Gamma}\mathbb{E}_{\pi\sim p_\theta(\cdot\mid P)}[\,f(\mathcal{T}(\pi);P)\,]$,
	where $\mathcal{T}$ is the local improvement map.
	Conceptually, this training criterion treats the improved cost $f(\mathcal{T}(\pi);P)$ as the learning signal,
	thereby shaping the underlying EBM sampler to allocate probability mass to regions that consistently lead to
	high-quality solutions after a lightweight local refinement.
	
		Algorithm~\ref{alg:pretrain} summarizes the pretraining procedure.
		At each step we sample a mini-batch $\{P_i\}_{i=1}^B\sim \Gamma$ and, for each instance,
		draw $N$ permutations $\{\pi_{i,k}\}_{k=1}^N$ via parallel MH chains of length $L$.
	We use relatively long chains during pretraining to reduce sample correlation and stabilize learning in the cold-start regime.
	Each sample is refined once by $\hat\pi_{i,k}=\mathcal T(\pi_{i,k})$ and scored by $\hat f_{i,k}=f(\hat\pi_{i,k};P_i)$.
	We then update $\theta$ with the unbiased, variance-reduced gradient estimator \eqref{eq:gd-est}
		using the instance-wise baseline $b_i=\frac1N\sum_{k=1}^N \hat f_{i,k}$. This pretraining stage learns transferable structural priors jointly in the cross-graph attention network and the EBM,
	so that the model can produce strong zero-shot solutions and provide a favorable initialization for subsequent deployment-time adaptation.
	Empirically, we observe that the pretrained policy yields faster progress and better robustness on challenging benchmarks,
	as analyzed in Section~\ref{subsec:ablation}.
	
	\label{apx:subsec-pretrain}
	\begin{algorithm}[!htbp]
		\caption{Pre-training with Pushforward Transformation}
		\label{alg:pretrain}
		\begin{algorithmic}[1]
			\State \textbf{Input:} data distribution $\Gamma$, EBM $p_{\theta}$ with score function $\Phi_{\theta}$, number of training steps $S$, batch size B, number of samples $N$ per instance, length of chains $L$. 
			\For{$s=1,...,S$}
			\State Sample a batch of instances $\{\mathcal{P}_i\}_{i\in[B]}$ from $\Gamma$.
			\State Use the MH sampler with $N$ chains of length $L$ to sample $\{\pi_{i,k}\}_{k\in[N]}$ from $p_{\theta}(\cdot\mid\mathcal{P}_i)$ for each instance $\mathcal{P}_i$, where $i=1,\dots, B$.
			\State Apply local improvement map to obtain $\hat{\pi}_{i,k} = \mathcal{T}(\pi_{i,k})$. \hfill $(i\in[B],\,k\in[N])$
			\State Evaluate objective for improved samples: $\hat{f}_{i,k}= f(\hat{\pi}_{i,k};\mathcal{P}_i)$. \hfill $(i\in[B],\,k\in[N])$
			\State $b_i\leftarrow \frac{1}{N}\sum\limits_{k=1}^N\hat{f}_{i,k},\,i\in[B].$
			\State $\hat{G}\leftarrow \frac{1}{B(N-1)}\sum\limits_{i=1}^B\sum\limits_{k=1}^N(\hat{f}_{i,k} - b_i)\nabla_\theta \Phi_{\theta}(\pi_{i,k}).$
			\State $\theta \leftarrow \mathrm{Adam}(\theta, \hat{G})$.
			\EndFor
			
		\end{algorithmic}
	\end{algorithm}

	\subsection{Batch-Wise Warm-Started MCMC Finetuning}
	To enhance generalization, we introduce an MCMC-based finetuning stage that efficiently adapts the model to new target instances.  As detailed in Algorithm \ref{alg:finetune}, the procedure begins by generating $K$ initial states $\{\pi_{i,k}\}_{k\in[K]}$  for each problem instance $\mathcal{P}_i$  using a long-run MH sampler to ensure high-quality starting points. At each subsequent finetuning step, $M$ parallel Markov chains of length $L$ are launched from every $\pi_{i,k}$, producing  terminal states $\{\pi_{i,k}^m\}_{m\in[M]}$. These terminal states are then refined by a local improvement map to $\hat{\pi}_{i,k}^m=\mathcal{T}(\pi_{i,k}^m)$, after which the objectives of these refined samples are evaluated. This entire process of sampling, improvement and evaluation runs in parallel for all instances in a batch. Next, the resulting samples are aggregated to estimate a single policy gradient which is used to perform a synchronized update on the shared model parameters. Finally, the initial states for the next iteration follow a within group best retention rule. For each $k$, the element of $\{\hat{\pi}_{i,k}^m\}_{m\in[M]}$ with the smallest objective value is selected as the next initial state. In this way, high quality solutions from the previous iteration are preserved as warm starts that steer subsequent sampling toward promising regions and promote steady convergence. 
	
	Our finetuning is performed in a batch-wise manner, enabling the network to simultaneously explore for high-quality solutions across different instances. In contrast to traditional per-instance search methods that adjust model parameters separately for each test instance, this highly parallel approach enhances computational efficiency and leads to a considerable reduction in runtime.  While instances within a batch share a common set of network parameters during the update, our experiment in Appendix~\ref{apx:subsec-batch-wise} confirms that this shared adaptation scheme does not impede final solution quality.

	The efficacy of our finetuning framework hinges on two core design principles. The first is the dynamic use of Markov chain lengths. For generating the initial set of start points, we employ long chains, which ensures the initial points are high-quality samples drawn faithfully from the model to leverage the power of pretraining. Once finetuning commences, we switch to using short chains which facilitate the preservation of structural information carried by the initial states. The second is the mechanism for interplay across chains, which is crucial for striking a  trade off between solution quality and sample diversity.  If all filtered samples are indiscriminately pooled as starting points for the next iteration, diversity is abundant but instability arises from low quality chains. If only globally best samples are retained, trajectories derived from the same initial state tend to dominate and diversity deteriorates. In contrast, our within group competitive mechanism suppresses the adverse influence of inferior chains while preventing the loss of global diversity.

	\begin{algorithm}[!htbp]
		\caption{Batch-wise warm-started MCMC finetuning}
		\label{alg:finetune}
		\begin{algorithmic}[1]
			\State \textbf{Input:} pretrained EBM $p_{\theta}$ with score function $\Phi_{\theta}$, a batch of instances $\{\mathcal{P}_1,\dots,\mathcal{P}_B\}$, number of starting points $K$, number of chains per sample $M$,  length of chains $L$, number of epochs $T$. 
			\State  Use a long-run MH sampler to generate initial points $\pi_{i,k}\sim p_{\theta}(\cdot\mid\mathcal{P}_i)$. \hfill $(i\in[B], k\in[K])$
			\For{$t=1,...,T$} 
			\For{each instance $i \in [B]$ \textbf{in parallel}}
			\State \parbox[t]{0.8\linewidth}{
				Starting from $\pi_{i,k}$, use the MH sampling with $M$ chains of length $L$
				to sample $\{\pi_{i,k}^m\}_{m\in[M]}$ from distribution
				$p_{\theta}(\cdot\mid\mathcal{P}_i)$. \hfill $(k\in[K])$
			}
			
			\State \parbox[t]{0.8\linewidth}{
				Apply local improvement map to obtain
				$\hat{\pi}_{i,k}^m = \mathcal{T}(\pi_{i,k}^m)$. \hfill $(k\in[K],\,m\in[M])$
			}
			
			\State \parbox[t]{0.8\linewidth}{
				Evaluate objective for improved samples:
				$\hat{f}_{i,k}^m = f(\hat{\pi}_{i,k}^m;\mathcal{P}_i)$. \hfill $(k\in[K],\,m\in[M])$
			}
			\EndFor
			\State $b_i\leftarrow \frac{1}{KM}\sum\limits_{k=1}^K\sum\limits_{m=1}^M\hat{f}^m_{i,k},\,i\in[B].$
			\State $\hat{G}\leftarrow \frac{1}{B(KM-1)}\sum\limits_{i=1}^B\sum\limits_{k=1}^K\sum\limits_{m=1}^M (\hat{f}_{i,k}^m - b_i)\nabla_\theta \Phi_{\theta}(\pi_{i,k}^m)$.
			\State $\theta \leftarrow \mathrm{Adam}(\theta, \hat{G})$.
			\State  $\pi_{i,k} \leftarrow \operatorname*{arg\,min}\bigl\{\, f(\hat{\pi}) \mid \hat{\pi}\in\{\hat{\pi}_{i,k}^m\}_{m\in[M]} \,\bigr\}. \hfill (i\in[B],\,k\in[K])$
			\EndFor
		\end{algorithmic}
	\end{algorithm}
	
	\section{Application in the BM Problem}\label{sec:application-BM}
	In this section, we demonstrate how the proposed QAP solver PLMA can be applied to the BM problem. The BM problem is NP-hard and arises in a variety of applications, including sparse matrix computations and circuit design. Given an undirected graph $G=(V,E)$ with vertex set $V=\{1,\dots,n\}$ and edge set $E$, the BM problem aims to find a permutation $\pi\in\Pi_n$ that minimizes the graph bandwidth, i.e.,
	\[
	\min_{\pi\in\Pi_n}\ \max_{(i,j)\in E} |\pi(i)-\pi(j)|.
	\]
	The BM problem can also be formulated in matrix form. Let $A\in\mathbb{R}^{n\times n}$ be the adjacency matrix of $G$. The bandwidth of $A$ is defined by $b(A)=\max_{A_{ij}\neq 0}|i-j|$.
	Then the BM problem is equivalent to finding a permutation matrix $X$ such that the bandwidth of the permuted matrix $XAX^{\rm T}$ is minimized, namely,
	\[
	\min_{X\in\Pi_n} b(XAX^{\rm T}).
	\]
	
	A standard way to solve the BM problem is to reduce it to a sequence of QAPs. For each integer $m\in\{0,1,\dots,n-1\}$, define the symmetric Toeplitz matrix $B_m\in\mathbb{R}^{n\times n}$ by $(B_m)_{ij}=\max\{|i-j|-m,\,0\},$
	and consider
	\begin{equation}
		\label{eq:hm}
		h(m)=\min_{X\in\Pi_n}\langle B_m, XAX^{\rm T}\rangle.
	\end{equation}
	By construction, $h(m)=0$ if and only if there exists a permutation matrix $X$ such that $b(XAX^{\rm T})\le m.$
	Therefore, the BM problem can be reformulated as the problem of finding the smallest nonnegative integer $m$ such that $h(m)=0$, namely,
	\begin{equation}
		\label{eq:bm}
		\min \quad m \qquad \text{s.t.}\quad h(m)=0.
	\end{equation}
	
	Let $m^*$ denote the optimal bandwidth. Then $m^*$ is the unique solution of \eqref{eq:bm}. Moreover, $h(m)=0$ for all $m\ge m^*$, and $h(m)>0$ for all $0\le m<m^*$. Hence, $h(m)$ induces a monotone feasibility criterion, which allows us to employ a bisection strategy. At each iteration, we solve the QAP in \eqref{eq:hm} approximately using PLMA and then update the search interval according to whether the obtained objective value is zero. In this way, we obtain a bisection-based PLMA method, termed Bi-PLMA, for solving the BM problem. The procedure is summarized in Algorithm~\ref{alg:bi-plma}.
	
	\begin{algorithm}[!htbp]
		\caption{Bi-PLMA for the BM problem}
		\label{alg:bi-plma}
		\begin{algorithmic}[1]
			\State \textbf{Input:} Adjacency matrix $A$, initial permutations $\pi_{\mathrm{init}}$, probabilistic model $p_\theta$, and hyperparameters.
			\State Initialize the lower and upper bounds by $\underline{m}=0$ and $\overline{m}=n-1$.
			\While{$\overline{m}-\underline{m}>1$}
			\State Set $m=\left\lceil(\underline{m}+\overline{m})/2\right\rceil$.
			\State Solve the QAP subproblem
			\[
			\hat{h}(m)\leftarrow \text{PLMA}(A,B_m,p_\theta,\pi_{\mathrm{init}})
			\]
			and update the model $p_\theta$ together with the initial permutations $\pi_{\mathrm{init}}$.
			\If{$\hat{h}(m)>0$}
			\State $\underline{m}\leftarrow m$.
			\Else
			\State $\overline{m}\leftarrow m$.
			\EndIf
			\EndWhile
			\State \textbf{Return:} $\overline{m}$ and the associated permutation $\pi$.
		\end{algorithmic}
	\end{algorithm}
	
	In the BM setting, the bisection procedure requires solving a sequence of QAP subproblems with matrix pairs $(A,B_m)$, where the matrix $B_m$ varies with the current bandwidth threshold $m$. To better exploit the shared structure among these subproblems, we replace the original instance-conditioned network with a simple parameterization that treats $\theta\in\mathbb{R}^{n\times n}$ as the learnable variable, and define the heatmap by
	\[
	\phi=\phi(\theta)=\text{log-Sinkhorn}(C\theta),
	\]
	where $C>0$ is a clipping constant. This design makes the heatmap independent of the particular choice of $B_m$. Although the QAP objective changes as $m$ varies, all subproblems are induced by the same underlying BM task and therefore share the same implicit optimization goal, namely, searching for a permutation with minimum bandwidth. Consequently, during the bisection process, the learned heatmap can be continuously updated and passed from one QAP subproblem to the next, yielding a natural warm-start effect.
		\begin{table*}[!t]
		\caption{Performance comparison on synthetic datasets (256 instances). Ro-TS (xk) denotes Ro-TS with x*1000*n iterations. IPFP (x) runs IPFP from x random initializations. PLMA ($T=1$) represents the zero-shot performance of the pre-trained model without any fine-tuning, while $T=50/200$ correspond to the results after 50 / 200 fine-tuning iterations.}
		\label{tab:synthetic}
		\centering
		\footnotesize
		\setlength{\tabcolsep}{4pt}
		\renewcommand{\arraystretch}{0.9}
		\begin{tabular}{l rrr @{\hspace{1.5em}} rrr @{\hspace{1.5em}} rrr}
			\toprule
			& \multicolumn{3}{c}{\textbf{QAP20}} & \multicolumn{3}{c}{\textbf{QAP50}} & \multicolumn{3}{c}{\textbf{QAP100}} \\
			\cmidrule(lr){2-4} \cmidrule(lr){5-7} \cmidrule(lr){8-10}
			Algorithm & Cost & Gap & Time & Cost & Gap & Time & Cost & Gap & Time \\
			\midrule
			\multicolumn{10}{c}{\textbf{\textit{Geometrically Structured Datasets}}} \\
			\midrule
			Ro-TS (1k)       & 54.38 & 0.01\% & 17.04s & 375.99 & 0.14\% & 4m35s & 1593.27 & 0.13\% & 38m56s \\
			Ro-TS (5k)       & \textbf{54.37} & \textbf{0.00\%} & 1m25s & \textbf{375.48} & \textbf{0.00\%} & 22m53s & 1591.25 & 0.00\% & 3h15m \\
			BMA       & \textbf{54.37} & \textbf{0.00\%} & 1m57s & 375.60 & 0.03\% & 15m53s & 1591.58 & 0.02\% & 2h21m \\
            C-SA & 54.53 & 0.28\% & 1m57s & 376.56 & 0.29\% & 21m32s & 1592.95& 0.11\% & 2h45m \\
			\cmidrule(lr){1-10}
			IPFP            & 55.11 & 1.37\% & 2.04s & 378.76 & 0.88\% & 11.47s & 1600.27 & 0.57\% & 1m34s \\
			
			IPFP (10)        & 54.54 & 0.31\% & 20.97s & 376.60 & 0.30\% & 2m30s & 1594.76 & 0.22\% & 17m34s \\
			RRWM            & 71.30 & 31.09\% & 11.30s & 428.78 & 14.14\% & 39.23s & 1700.33 & 6.86\% & 6m32s \\
			SM              & 64.38 & 18.45\% & 0.22s & 426.92 & 13.70\% & 7.14s & 1753.10 & 10.17\% & 1m40s \\
			\cmidrule(lr){1-10}
			NGM             & 62.93 & 15.87\% & 24.78s & 429.69 & 14.46\% & 1m16s & 1773.71 & 11.47\% & 2m29s \\
			SAWT (10k)       & 54.72 & 0.64\% & 3m41s & 380.92 & 1.45\% & 5m36s & 1617.30 & 1.64\% & 10m43s \\
			\cmidrule(lr){1-10}
			
			\textbf{PLMA ($T=1$)} & 54.63 & 0.48\% & 0.06s &379.79 & 1.15\% & 0.41s&1607.84 & 1.04\%  & 4.27s\\
			\textbf{PLMA ($T=50$)} & \textbf{54.37} & \textbf{0.00\%} & 2.57s &375.55 & 0.20\%& 19.88s &1591.73 & 0.03\%  & 3m30s\\
			\textbf{PLMA ($T=200$)}  &\textbf{54.37} &\textbf{0.00\%}& 9.36s& \textbf{375.48} & \textbf{0.00\%} & 1m19s & \textbf{1591.23} & \textbf{0.00\%} & 13m58s \\
			
			\midrule
			\multicolumn{10}{c}{\textbf{\textit{Uniformly Random Datasets}}} \\
			\midrule
			Ro-TS(1k)       & 76.61 & 0.07\% & 17.56s & 523.08 & 0.22\% & 4m36s & 2195.98 & 0.13\% & 38m59s \\
			Ro-TS(5k)       & \textbf{76.56} & \textbf{0.00\%} & 1m27s & 521.91 & 0.00\% & 22m59s & 2193.16 & 0.00\% & 3h15m \\
			BMA       & \textbf{76.56} & \textbf{0.00\%} & 1m36s & 521.79 & -0.02\% & 17m26s & 2194.14 & 0.04\% & 3h32m \\
            
              C-SA & 76.64 & 0.11\% & 2m2s & 523.19 & 0.25\% & 22m21s & 2193.69 & 0.02\% & 2h45m \\
			\cmidrule(lr){1-10}
			IPFP            & 79.13 & 3.39\% & 2.12s & 530.74 & 1.69\% & 7.45s & 2211.38 & 0.83\% & 41.95s \\
			IPFP(25)        & 77.60 & 1.37\% & 54.70s & 526.96 & 0.97\% & 4m20s & 2203.29 & 0.46\% & 19m20s \\
			RRWM            & 93.50 & 22.27\% & 11.11s & 592.50 & 13.54\% & 31.91s & 2432.34 & 10.91\% & 5m6s \\
			SM              & 92.32 & 20.73\% & 0.19s & 605.08 & 15.95\% & 5.65s & 2457.47 & 12.05\% & 1m23s \\
			\cmidrule(lr){1-10}
			NGM             & 88.34 & 15.49\% & 25.08s & 594.99 & 14.01\% & 1m17s & 2438.52 & 11.19\% & 2m29s \\
			\cmidrule(lr){1-10}
			\textbf{PLMA ($T=1$)}        &  78.23 & 2.20\% & 0.06s & 538.42 & 3.17\% & 0.40s  & 2243.43 & 2.29\%&  4.32s   \\
			\textbf{PLMA ($T=50$)}        &  76.59 & 0.05\% & 2.47s & 523.95 & 0.40\% & 19.51s  & 2200.40 & 0.34\%&  3m30s   \\
			\textbf{PLMA ($T=200$)}        & \textbf{76.56} & \textbf{0.00\%} & 9.60s & \textbf{521.75} &  \textbf{-0.03\%} & 1m18s & \textbf{2193.13} & \textbf{0.00\%} & 14m1s \\
			\bottomrule
		\end{tabular}
	\end{table*}
	
	\section{Experiments}\label{sec:experiments}
	\subsection{Experimental Setup}
	All experiments are conducted on a server with NVIDIA Tesla A800 GPUs (80GB) and Intel Xeon Gold 6326 CPUs (256GB) at  2.90GHz.
	
	\textbf{Datasets.} We evaluate the in-distribution performance of our model on two families of synthetic QAP instances using sizes $n\in\{20,50,100\}$. The synthetic data comprise geometrically structured instances generated as in \cite{tanlearning} and uniformly random instances produced following \cite{Taillard1991}. Out-of-distribution performance is assessed on large-scale instances with $n\in\{200,500\}$, where we generate 64 uniformly random instances for each large size, and on the real-world QAPLIB \cite{burkard1997qaplib} and Taixxeyy \cite{drezner2005taie} benchmarks. For out-of-distribution evaluation, the pretraining is performed with uniformly random instances with $n=100$. To clarify the structural differences between the training and testing distributions, we describe the generation mechanisms for each dataset in Appendix~\ref{subsec:datasets}.

	\textbf{Baselines.} We compare our model against a wide spectrum of established and modern approaches, grouped into three categories.
	(i) For search-based solvers, we consider strong representative baselines, including robust tabu search (Ro-TS) \cite{Taillard1991}, a highly optimized heuristic solver, BMA~\cite{BMA}, a memetic search algorithm that combines the genetic algorithm and breakout local search, and Connolly's simulated annealing algorithm (C-SA)~\cite{connolly1990SA}.
	(ii) For heuristic solvers, we benchmark against classic iterative algorithms often used in graph matching, including IPFP~\cite{Leordeanu2005ipfp}, SM~\cite{LeordeanuH05sm}, and RRWM~\cite{ChoLL10rrwm}.
	(iii) For learning-based solvers, we include two recent deep learning approaches, SAWT~\cite{tanlearning} and NGM~\cite{wang2021neural}. Implementation details for these baselines are available in Appendix~\ref{apx:subsec-implementation-baseline}. 
	
	\textbf{Metrics.} We evaluate all methods using three metrics: (i) Cost, the average objective value over all test instances; (ii) Gap, the average relative gap from a reference solution $C_{\text{ref}}$, computed as $(C_{\text{alg}} - C_{\text{ref}}) / C_{\text{ref}} \times 100\%$, where $C_{\text{ref}}$ represents the best known solution for benchmarks (QAPLIB, Taixxeyy) or the Ro-TS solution for synthetic datasets; and (iii) Time, the total computation time to solve the entire test set.
	
	\subsection{Main Benchmark Results}
    \subsubsection{Results on synthetic QAP datasets}
	The results in Table~\ref{tab:synthetic} establish PLMA as the new state-of-the-art. With 200 finetuning steps, PLMA consistently matches or surpasses the strongest handcrafted baselines, including Ro-TS, BMA, and C-SA, while remaining remarkably efficient; on challenging QAP100 instances, it achieves an optimality gap of 0.00 \% over 13 times faster than Ro-TS (5k). The results also validate our two-stage design. The pre-trained model ($T=1$) instantly yields high-quality results that surpass other learning-based methods (NGM, SAWT). With brief fine-tuning ($T=50$), the model rapidly converges to near-optimality, demonstrating the synergy between a strong pre-trained starting distribution and efficient MCMC-based refinement.
	
	\begin{table}[t]
		\caption{Performance comparison on large-scale synthetic datasets (64 instances).}
		\label{tab:large-scale}
		\centering
		\footnotesize
		\setlength{\tabcolsep}{1.5pt}
		\begin{tabular}{l rrr @{\hspace{1.5em}} rrr}
			\toprule
			& \multicolumn{3}{c}{\textbf{QAP200}} & \multicolumn{3}{c}{\textbf{QAP500}} \\
			\cmidrule(lr){2-4} \cmidrule(lr){5-7} 
			Algorithm & Cost & Gap & Time & Cost & Gap & Time  \\
			\midrule
			Ro-TS       & 6640.28 & 0.00\% & 3h48m & 43028.56 & 0.00\% & 29h59m \\
			BMA       & 6641.26 & 0.01\% & 2h58m & 42989.26 & -0.09\% & 23h16m \\
			
			SAWT (10k)   & 6735.97 & 1.44\% & 8m33s & 43696.84 & 1.55\% & 48m36s \\

            	\textbf{PLMA ($T=1$)} & 6675.55 & 0.53\% & 7.50s & 43190.13 & 0.38\% & 1m31s \\
                			\textbf{PLMA ($T=50$)}  & \textbf{6639.93} & \textbf{-0.01\%} & 6m15s & \textbf{42975.27} & \textbf{-0.12\%}  & 1h15m \\
			
			\bottomrule
		\end{tabular}
	\end{table}

	\begin{table*}[!t]
		\caption{Performance on QAPLIB. Average gaps and computation time (s) are reported for each category. The search terminates upon finding the optimal solution or reaching the iteration limit.}
		\label{tab:qaplib_results}
		\centering
		\setlength{\tabcolsep}{2.5pt}
		\begin{tabular}{l cccccccccc}
			\toprule
			& \multicolumn{2}{c}{Ro-TS} & \multicolumn{2}{c}{BMA} & \multicolumn{2}{c}{SAWT} & \multicolumn{2}{c}{IPFP} & \multicolumn{2}{c}{PLMA} \\
			\cmidrule(lr){2-3} \cmidrule(lr){4-5} \cmidrule(lr){6-7} \cmidrule(lr){8-9} \cmidrule(lr){10-11}
			Class & Gap & ${\text{Time}}$ & Gap & ${\text{Time}}$ & Gap & ${\text{Time}}$ & Gap & ${\text{Time}}$ & Gap & ${\text{Time}}$ \\
			\midrule
			bur (26) & \textbf{0.00\%} & 0.12 & \textbf{0.00\%} & 0.03 & 3.95\% & 14.67 & 0.05\% & 0.33 & \textbf{0.00\%} & 0.08 \\
			chr (12-25) & 0.48\% & 0.16 & 0.18\% & 0.14 & 147.54\% & 14.18 & 14.90\% & 0.26 & \textbf{0.00\%} & 0.31 \\
			els (19) & \textbf{0.00\%} & 0.02 & \textbf{0.00\%} & 0.02 & 47.37\% & 14.24 & 10.18\% & 0.36 & \textbf{0.00\%} & 0.09 \\
			esc (16-128) & \textbf{0.00\%} & 0.62 & \textbf{0.00\%} & 0.01 & 43.29\% & 15.07 & 0.39\% & 0.63 & \textbf{0.00\%} & 0.07 \\
			had (12-20) & \textbf{0.00\%} & 0.00 & \textbf{0.00\%} & 0.00 & 5.17\% & 14.23 & 0.08\% & 0.39 & \textbf{0.00\%} & 0.05 \\
			kra (30-32) & \textbf{0.00\%} & 0.24 & \textbf{0.00\%} & 0.07 & 32.92\% & 14.77 & 0.65\% & 0.62 & \textbf{0.00\%} & 0.31 \\
			lipa (20-90) & 0.03\% & 3.28 & \textbf{0.02\%} & 2.35 & 1.40\% & 17.32 & 1.07\% & 4.93 & 0.08\% & 1.57 \\
			nug (12-30) & \textbf{0.00\%} & 0.02 & \textbf{0.00\%} & 0.02 & 19.25\% & 14.39 & 0.05\% & 0.38 & \textbf{0.00\%} & 0.17 \\
			rou (12-20) & \textbf{0.00\%} & 0.04 & \textbf{0.00\%} & 0.04 & 15.09\% & 14.25 & 0.77\% & 0.33 & \textbf{0.00\%} & 0.08 \\
			scr (12-20) & \textbf{0.00\%} & 0.01 & \textbf{0.00\%} & 0.01 & 33.92\% & 14.22 & 1.24\% & 0.28 & \textbf{0.00\%} & 0.08 \\
			sko (42-100) & 0.05\% & 29.22 & \textbf{0.03\%} & 21.00 & 16.17\% & 19.06 & 0.30\% & 8.40 & \textbf{0.03\%} & 7.01 \\
			ste (36) & 0.01\% & 0.53 & \textbf{0.00\%} & 0.30 & 107.95\% & 14.95 & 1.81\% & 0.70 & \textbf{0.00\%} & 0.55 \\
			tai (10-256) & 0.23\% & 25.16 & 0.21\% & 19.06 & 34.67\% & 16.62 & 0.92\% & 3.33 & \textbf{0.20\%} & 4.73 \\
			tho (30-150) & \textbf{0.04\%} & 38.74 & \textbf{0.04\%} & 26.59 & 24.05\% & 17.19 & 0.42\% & 12.46 & \textbf{0.04\%} & 6.42 \\
			wil (50-100) & \textbf{0.02\%} & 26.06 & \textbf{0.02\%} & 18.41 & 9.52\% & 17.64 & 0.12\% & 9.66 & \textbf{0.02\%} & 6.82 \\
			\midrule
			\textbf{Average} & 0.11\% & 9.68 & 0.07\% & 7.06 & 37.82\% & 15.85 & 2.15\% & 2.73 & \textbf{0.06\%} & 2.16 \\
			\bottomrule
		\end{tabular}
	\end{table*}

    	\begin{table*}[t]
		\caption{Grouped results on Taixxeyy instances, with all metrics averaged over 10 independent runs. For each class, the reported mean and [min, max] gaps (\%) are averages of per-instance statistics.}
		\label{tab:tai_results}
		\centering
		
		\setlength{\tabcolsep}{3pt} 
		\begin{tabular}{l ccc ccc ccc ccc}
			\toprule
			& \multicolumn{3}{c}{Ro-TS} & \multicolumn{3}{c}{BMA} & \multicolumn{3}{c}{IPFP} & \multicolumn{3}{c}{PLMA} \\
			\cmidrule(lr){2-4} \cmidrule(lr){5-7} \cmidrule(lr){8-10} \cmidrule(lr){11-13}
			Class & mean & [min, max] & Time (s) & mean & [min, max] & Time (s) & mean & [min, max] & Time (s) & mean & [min, max] & Time (s) \\
			\midrule
			tai27e & 41.50 & [0.11, 221.08] & 0.57 & 0.12 & [0.00, 0.67] & 0.18 & 19.47 & [6.14, 34.11] & 0.38 & \textbf{0.00} & [0.00, 0.00] & 0.08 \\
			tai45e & 101.89 & [1.00, 400.60] & 3.83 & 8.39 & [0.74, 17.63] & 2.21 & 22.01 & [8.26, 36.98] & 1.03 & \textbf{0.00} & [0.00, 0.00] & 0.18 \\
			tai75e & 111.28 & [6.20, 280.01] & 18.49 & 15.41 & [5.77, 22.76] & 11.21 & 27.64 & [17.56, 36.78] & 2.52 & \textbf{0.08} & [0.00, 0.63] & 1.27 \\
			tai125e & 82.53 & [7.65, 265.54] & 72.52 & 16.51 & [10.56, 21.72] & 66.01 & 26.93 & [20.64, 32.67] & 8.52 & \textbf{3.65} & [0.78, 6.32] & 8.65 \\
			tai175e & 67.86 & [9.11, 260.98] & 158.18 & 21.07 & [15.15, 26.56] & 193.65 & 23.32 & [17.70, 28.11] & 16.86 & \textbf{9.09} & [5.96, 12.24] & 14.43 \\
			\midrule
			Average & 81.01 & [4.82, 285.64] & 50.72 & 12.30 & [6.44, 17.87] & 54.65 & 23.87 & [14.06, 33.73] & 5.86 & \textbf{2.56} & [1.35, 3.84] & 4.92 \\
			\bottomrule
		\end{tabular}
	\end{table*}
	
	\begin{table*}[!t]
		\centering
		\includegraphics[width=\textwidth]{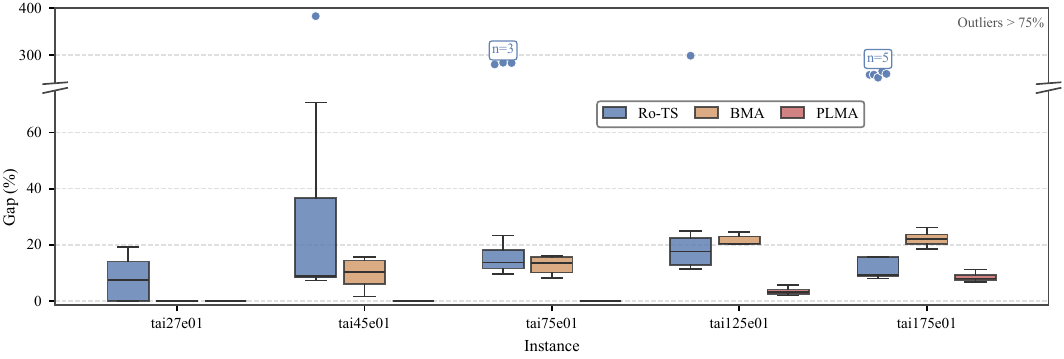}
		\captionof{figure}{Gap distributions on representative Taixxeyy instances over 10 independent runs, comparing PLMA against Ro-TS and BMA on \texttt{tai27e01}, \texttt{tai45e01}, \texttt{tai75e01}, \texttt{tai125e01}, and \texttt{tai175e01}. The main lower axis focuses on the boxplot body, while large-gap outliers are shown separately in the upper broken axis.}
		\label{fig:tai_gap_distribution}
	\end{table*}
	
	\subsubsection{Results on large-scale synthetic datasets}
	To assess the scalability of PLMA, we evaluate its performance on instances of a significantly larger scale ($n\in\{200,500\}$). The results in Table~\ref{tab:large-scale} confirm that our approach scales effectively to these larger problem sizes. Remarkably, PLMA remains competitive with or even superior to strong handcrafted baselines such as Ro-TS and BMA, while requiring only a fraction of the computational cost.

	\subsubsection{Results on QAPLIB}
	We evaluate PLMA on the widely-used QAPLIB benchmark, with results summarized in Table~\ref{tab:qaplib_results}. PLMA achieves the best overall performance, attaining the lowest average optimality gap of 0.06\% while also requiring the least average computation time among all heuristic baselines. These results demonstrate the strong generalization ability and practical efficiency of PLMA on diverse real-world instances.
	
	\subsubsection{Results on Taixxeyy instances} We test robustness on challenging Taixxeyy instances over 10 independent runs. In Table~\ref{tab:tai_results}, we report grouped results. PLMA delivers a low average gap of 2.56\% and a reliable worst-case performance with an average maximum gap of 3.84\%. In contrast, handcrafted local-search baselines are markedly less stable: Ro-TS shows a highly erratic 81.01\% average gap with catastrophic failures up to 285.64\%, while BMA also remains clearly inferior in both average and worst-case performance. To further assess robustness, we visualize the distribution of final gaps in Figure~\ref{fig:tai_gap_distribution}. The main lower axis focuses on the central boxplot region, while large-gap failures are shown separately as individual points in the upper broken axis. In contrast to Ro-TS and BMA, which exhibit noticeable variability and extreme failures, PLMA yields no outliers and its boxes are distinctly narrower, indicating stable behavior across trials.
	\begin{figure*}[!t]
		\centering
		\includegraphics[width=\textwidth]{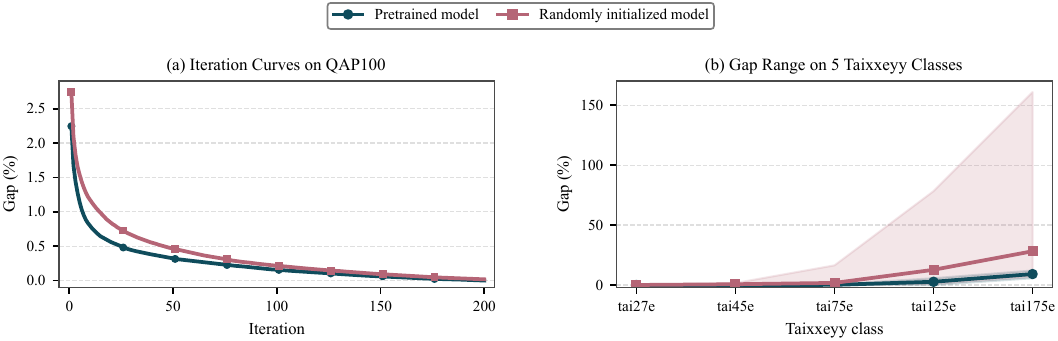}
		\caption{Performance comparison between finetuning from a pretrained model and finetuning from a randomly initialized model. Panel (a) reports the iteration curves on uniformly random dataset ($n=100$). Panel (b) reports the gap range over five Taixxeyy classes, where the solid lines denote the mean gap and the shaded regions indicate the interval between the minimum and maximum gaps over 10 independent trials.}
		\label{fig:pretrain_vs_random}
	\end{figure*}
	\begin{figure*}[!t]
		\centering
		\includegraphics[width=\textwidth]{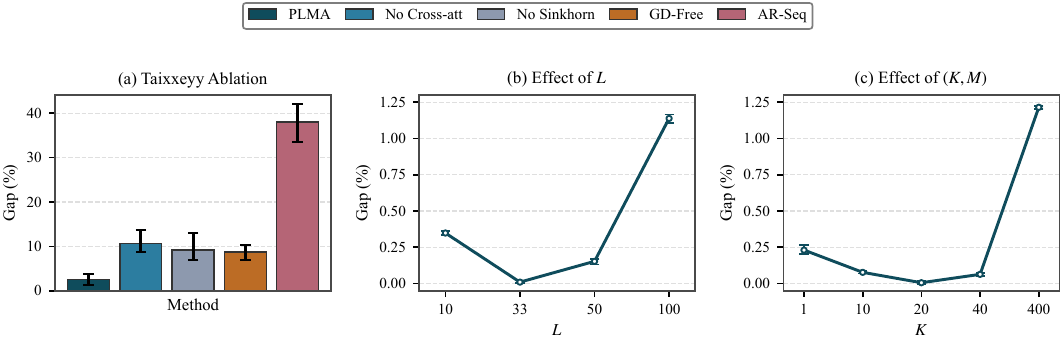}
		\caption{Ablation and hyperparameter study. Panel (a) summarizes the Taixxeyy ablation results using the mean gap and the $[\min,\max]$ range aggregated over the five Taixxeyy classes. Panels (b) and (c) report the effect of the Markov chain length $L$ and the sampling configuration $(K,M)$ on uniformly random dataset with $n=100$, respectively, where the error bars indicate the minimum and maximum gaps over 10 independent runs. In panel (c), the horizontal axis uses $K$ only, while the total sample budget is fixed by keeping $K \times M = 400$.}
		\label{fig:ablation_hyperparameter}
	\end{figure*}

	\subsection{Ablation Study}
	\label{subsec:ablation}
    We conduct ablation studies on pretraining, network design, and the finetuning strategy. To evaluate pretraining, we compare finetuning from a pretrained model against finetuning from a randomly initialized model on both the uniformly random dataset and the Taixxeyy benchmark. The former provides the in-distribution evaluation, whereas the latter serves as the out-of-distribution evaluation. 
    
    As reported in Figure~\ref{fig:pretrain_vs_random}(a), the pretrained model reaches lower optimality gaps in fewer iterations on the in-distribution dataset. This confirms that the transferable knowledge from pretraining provides a critical advantage in finetuning efficiency, enabling the model to provide superior solutions faster in the early stages of finetuning. At the same time, with sufficiently many finetuning iterations, the randomly initialized model can eventually approach the performance of the pretrained one. This behavior further supports the effectiveness of our warm-started MCMC finetuning. The results in Figure~\ref{fig:pretrain_vs_random}(b)  show that the pretrained model attains lower mean gaps and a narrower performance range across all five Taixxeyy classes. This  indicates that pretraining equips the model with general structural knowledge of the QAP that is necessary for adapting to these difficult instances. Without such priors, the randomly initialized model struggles to handle the complexity of Taixxeyy instances and fails to reach high-quality solutions even after many finetuning iterations.

    The network ablation on Taixxeyy is summarized in Figure~\ref{fig:ablation_hyperparameter}(a). Removing either the cross-attention module  (No Cross-att) or the Sinkhorn normalization (No Sinkhorn) degrades solution quality. For a fair comparison, No Cross-att replaces the removed attention block with additional GNN layers, so that its parameter count and inference time remain close to those of the full model.  These results demonstrate the essential roles these components play in fusing inter-graph information and structuring the probabilistic search space.

    The results in Figure~\ref{fig:ablation_hyperparameter}(a) also evaluate the finetuning strategy, where PLMA is compared with AR-Seq and GD-Free. AR-Seq replaces our energy-based formulation with an autoregressive factorization induced by the learned heatmap and constructs a permutation sequentially through conditional sampling. GD-Free performs MCMC sampling with 2-swap local improvement, but without gradient updates. The results reveal two key advantages of our approach. First, it overcomes the restart-from-scratch limitation of AR models by enabling warm-starts from prior solutions. Second, the superiority over GD-Free underscores the critical role of integrating 2-swap operations into the learning process, rather than merely applying them as post-processing.

	\subsection{Hyperparameter Study}
	To analyze the sensitivity of our framework and validate its design, we conduct a hyperparameter study for the warm-started MCMC finetuning stage (Algorithm~\ref{alg:finetune}). This analysis is performed on the uniformly random dataset ($n=100$), and the results are shown in Figure~\ref{fig:ablation_hyperparameter}(b)-(c). We focused on two key hyperparameter choices governing the sampling process: the Markov chain length $L$, and the allocation of a fixed sample budget across the number of initial starting points $K$ and the number of parallel chains per starting point $M$.
	
	Figure~\ref{fig:ablation_hyperparameter}(b) shows the impact of Markov chain length  $L$. The results confirm that a short-chain configuration is highly effective for adaptation. Optimal performance is achieved with $L = \lfloor n/3 \rfloor \approx 33$. This finding supports our core design principle: since the chains are warm-started from high-quality solutions found in the previous iteration, only a limited local exploration is needed for refinement. Excessively long chains risk straying from the promising region and wasting computation, while chains that are too short may not adequately explore the local neighborhood.
	
	   In Figure~\ref{fig:ablation_hyperparameter}(c), we study the balance between the diversity of starting points (controlled by $K$) and the search intensity around each point (controlled by $M$), keeping the total number of samples constant ($K \times M = 400$). The results show that extreme configurations perform poorly. For instance, using many starting points with minimal exploration each ($(K, M) = (400, 1)$) lacks the search intensity to refine solutions, whereas concentrating all resources on a single starting point ($(K, M) = (1, 400)$) suffers from a lack of initial diversity. The balanced configuration of $(K, M) = (20, 20)$ yields the best results, highlighting the benefit of exploring from a diverse set of promising solutions, with a sufficient search effort allocated to each.

	\subsection{Results on the Bandwidth Minimization Problem}
	We compare the proposed Bi-PLMA with three representative baseline methods. Specifically, we consider the classical reverse Cuthill--McKee (rCM) algorithm~\cite{rcm}, the improved reverse Cuthill--McKee (irCM) algorithm~\cite{ircm}, and a bisection-based variant built upon Ro-TS, referred to as Bi-RoTS, which is obtained by replacing the QAP solver in Algorithm~\ref{alg:bi-plma} with Ro-TS. The test set consists of large-scale sparse matrices with $120\le n\le 300$ selected from the UF Sparse Matrix Collection~\cite{sparsematrix}. Since rCM is extremely fast and usually provides a good feasible solution, we use the bandwidth returned by rCM as the initial upper bound $\overline m$ in Algorithm~\ref{alg:bi-plma}, which further accelerates the bisection procedure.
	
	In Table~\ref{tab:bandwidth_results}, $\mathrm{bw}^{r}$ denotes the bandwidth produced by rCM. The two variants irCM-iter and irCM-time correspond to irCM with a fixed maximum number of iterations (2000) and with a time limit set to twice the running time of Bi-PLMA, respectively. The results show that Bi-PLMA achieves the best overall balance between solution quality and computational efficiency among the compared methods. In particular, it substantially improves upon the initial rCM solutions on most instances, and it is generally both faster and more effective than Bi-RoTS, indicating that PLMA is better suited than Ro-TS for the sequence of related QAP subproblems arising in the bisection framework. Compared with irCM, Bi-PLMA is highly competitive in terms of bandwidth while remaining computationally efficient, especially on large-scale instances. These results demonstrate that combining the proposed PLMA solver with the bisection strategy provides an effective approach for large-scale bandwidth minimization.

\begin{table}[t]
	\caption{Bandwidth results on the benchmark set.}
	\label{tab:bandwidth_results}
	\centering
	\setlength{\tabcolsep}{2pt}
	\renewcommand{\arraystretch}{0.95}

	\begin{tabular}{
			l r r
			@{\hspace{0.6em}} r r
			@{\hspace{0.6em}} r r
			@{\hspace{0.6em}} r r
			@{\hspace{0.6em}} r r
		}
		\toprule
		&  &  & \multicolumn{2}{c}{Bi-PLMA} & \multicolumn{2}{c}{Bi-RoTS} & \multicolumn{2}{c}{irCM-iter} & \multicolumn{2}{c}{irCM-time} \\
		\cmidrule(lr){4-5}\cmidrule(lr){6-7}\cmidrule(lr){8-9}\cmidrule(lr){10-11}
		problem & n & $bw^r$ & bw & time & bw & time & bw & time & bw & time \\
		\midrule
		ash292 & 292 & 34 & \textbf{21} & 32.68 & 34 & 286.29 & \textbf{21} & 21.39 & \textbf{21} & 65.37 \\
		bcspwr04 & 274 & 58 & \textbf{27} & 40.27 & 58 & 235.75 & 29 & 165.33 & 29 & 80.61 \\
		bcsstk04 & 132 & 65 & \textbf{37} & 3.25 & 38 & 41.62 & 40 & 3.51 & 40 & 6.50 \\
		bcsstk05 & 153 & 24 & \textbf{20} & 3.90 & 23 & 182.75 & \textbf{20} & 3.18 & \textbf{20} & 7.81 \\
		bcsstk22 & 138 & 14 & 11 & 5.13 & 13 & 106.44 & \textbf{10} & 1.53 & \textbf{10} & 10.26 \\
		can\_144 & 144 & 18 & \textbf{13} & 9.50 & \textbf{13} & 120.26 & \textbf{13} & 5.52 & \textbf{13} & 18.99 \\
		can\_161 & 161 & 30 & \textbf{18} & 8.27 & 28 & 192.37 & 19 & 0.93 & \textbf{18} & 16.54 \\
		can\_187 & 187 & 23 & \textbf{13} & 6.78 & 16 & 366.42 & 15 & 1.37 & 15 & 13.57 \\
		can\_229 & 229 & 48 & \textbf{30} & 19.37 & 33 & 767.74 & 32 & 11.71 & 32 & 38.74 \\
		can\_256 & 256 & 118 & \textbf{59} & 52.78 & 82 & 944.16 & 62 & 189.03 & 62 & 105.55 \\
		can\_268 & 268 & 133 & \textbf{52} & 31.69 & 91 & 874.82 & 53 & 339.28 & 54 & 63.41 \\
		can\_292 & 292 & 67 & \textbf{39} & 29.50 & 67 & 285.73 & 50 & 40.47 & 50 & 59.01 \\
		dwt\_162 & 162 & 21 & \textbf{13} & 7.06 & 21 & 41.93 & 14 & 1.88 & 14 & 14.11 \\
		dwt\_193 & 193 & 59 & \textbf{32} & 13.64 & \textbf{32} & 356.45 & 34 & 57.31 & 34 & 27.33 \\
		dwt\_198 & 198 & 14 & \textbf{8} & 9.29 & 11 & 388.56 & 10 & 2.37 & 10 & 18.57 \\
		dwt\_209 & 209 & 59 & \textbf{24} & 26.12 & 29 & 571.81 & 29 & 9.56 & 29 & 52.24 \\
		dwt\_221 & 221 & 16 & \textbf{14} & 9.26 & 16 & 116.63 & \textbf{14} & 2.10 & \textbf{14} & 18.51 \\
		dwt\_234 & 234 & 25 & \textbf{11} & 21.29 & 19 & 691.36 & 12 & 10.03 & 12 & 42.59 \\
		dwt\_245 & 245 & 44 & \textbf{23} & 19.26 & 44 & 162.45 & 31 & 98.29 & 31 & 38.52 \\
		grid1 & 252 & 19 & \textbf{19} & 18.89 & \textbf{19} & 179.72 & \textbf{19} & 0.57 & \textbf{19} & 37.79 \\
		grid1\_dual & 224 & 17 & \textbf{17} & 16.54 & \textbf{17} & 122.17 & \textbf{17} & 0.48 & \textbf{17} & 33.09 \\
		jazz & 198 & 115 & \textbf{69} & 15.49 & 74 & 302.97 & 81 & 109.13 & 81 & 31.01 \\
		Journals & 124 & 117 & \textbf{98} & 5.66 & 99 & 95.63 & 102 & 4.17 & 102 & 11.32 \\
		lshp\_265 & 265 & 18 & \textbf{17} & 15.77 & 18 & 208.31 & \textbf{17} & 0.66 & \textbf{17} & 31.55 \\
		lund\_a & 147 & 23 & \textbf{23} & 3.58 & \textbf{23} & 30.26 & \textbf{23} & 0.31 & \textbf{23} & 7.15 \\
		lund\_b & 147 & 23 & \textbf{23} & 3.59 & \textbf{23} & 30.47 & \textbf{23} & 0.31 & \textbf{23} & 7.18 \\
		mesh3e1 & 289 & 17 & \textbf{17} & 9.87 & \textbf{17} & 278.60 & \textbf{17} & 0.72 & \textbf{17} & 19.73 \\
		mesh3em5 & 289 & 17 & \textbf{17} & 10.27 & \textbf{17} & 277.17 & \textbf{17} & 0.72 & \textbf{17} & 20.53 \\
		nos1 & 237 & 4 & 4 & 11.09 & 4 & 145.55 & \textbf{3} & 0.88 & \textbf{3} & 22.18 \\
		sphere3 & 258 & 34 & \textbf{27} & 47.23 & 34 & 191.50 & 28 & 1.39 & \textbf{27} & 94.46 \\
		Trefethen\_150 & 150 & 79 & \textbf{67} & 7.80 & \textbf{67} & 199.39 & 71 & 1.02 & 70 & 15.59 \\
		Trefethen\_200 & 200 & 100 & \textbf{85} & 24.75 & 88 & 559.94 & 88 & 1.21 & 88 & 49.49 \\
		Trefethen\_200b & 199 & 100 & \textbf{85} & 21.71 & 87 & 561.87 & 87 & 1.24 & 87 & 43.42 \\
		Trefethen\_300 & 300 & 152 & \textbf{128} & 71.07 & 152 & 312.64 & 132 & 3.27 & 132 & 142.14 \\
		\bottomrule
	\end{tabular}
\end{table}

\section{Conclusion}
    This paper presents PLMA, a permutation learning framework for the QAP that helps to narrow the gap between neural methods and strong handcrafted heuristics. The finetuning stage exploits the warm-start behavior of short MCMC chains so that each iteration of finetuning stays near previously discovered high quality assignments instead of restarting from scratch. A carefully designed additive energy-based model enables efficient exploration of the permutation space through constant-time swap evaluation, while the cross-graph attention mechanism offers a scalable and flexible way to capture the coupled two-graph structure in the QAP. Extensive experiments show that PLMA consistently delivers state-of-the-art results in both solution quality and computational efficiency across synthetic and real-world QAP benchmarks, including QAPLIB and the challenging Taixxeyy instances. These results suggest that PLMA remains effective across instances with diverse structural characteristics. We further show that PLMA extends naturally to QAP subproblems arising in the BM problem, highlighting its broader applicability. Future work will explore meta-learning strategies to better bridge pretraining and finetuning, and extend PLMA to broader classes of permutation-based problems with complex constraints.

\bibliography{main}
\bibliographystyle{IEEEtran}

\newpage
\clearpage
\onecolumn
\appendices
\renewcommand{\thesubsection}{\thesection.\arabic{subsection}}
\renewcommand{\thesubsectiondis}{\thesubsection}

\renewcommand{\thesubsubsection}{\thesubsection.\arabic{subsubsection}}
\renewcommand{\thesubsubsectiondis}{\thesubsubsection}

\section{Supplementary Theoretical Results}
\label{apx:proof}

\subsection{Proof of Theorem~\ref{thm:unbiased-gradient-estimator}}
\gradient*

\begin{proof}  \medskip\noindent We start by translating the target expectation to a convenient form. Recall that 
  \[
    p_\theta(\pi) \;=\; \frac{\exp(\Phi_\theta(\pi))}{Z_\theta}, \qquad Z_\theta = \sum_{\hat\pi\in\Pi_n}\exp(\Phi_\theta(\hat\pi)),
  \] 
  so we have $\log p_\theta(\pi) = \Phi_\theta(\pi) - \log Z_\theta$. Then for any $\pi$ drawn from $p_\theta$, we have 
  \begin{align*}
    &\mathbb{E}_{\pi \sim p_\theta}\!\big[g(\pi)\,\nabla_\theta\log p_\theta(\pi)\big] \\
      =\;& \mathbb{E}_{\pi \sim p_\theta}\!\Big[g(\pi)\,\big(\nabla_\theta \Phi_\theta(\pi) - \nabla_\theta \log Z_\theta\big)\Big] \\
      =\;& \mathbb{E}_{\pi \sim p_\theta}\!\Big[g(\pi)\,\nabla_\theta \Phi_\theta(\pi)\Big] \;-\; \nabla_\theta \log Z_\theta \; \mathbb{E}_{\pi \sim p_\theta}[g(\pi)] \\
      =\;& \mathbb{E}_{\pi \sim p_\theta}\!\Big[g(\pi)\,\nabla_\theta \Phi_\theta(\pi)\Big] \;\\
      & -\;\frac{1}{Z_\theta}\sum_{\hat\pi\in \Pi_n} \nabla_\theta \exp(\Phi_\theta(\hat\pi)) \;\mathbb{E}_{\pi \sim p_\theta}[g(\pi)] \\
      =\;& \mathbb{E}_{\pi \sim p_\theta}\!\Big[g(\pi)\,\nabla_\theta \Phi_\theta(\pi)\Big] \;\\
      &-\; \sum_{\hat\pi\in \Pi_n} \frac{\exp(\Phi_\theta(\hat\pi))}{Z_\theta}\nabla_\theta \Phi_\theta(\hat\pi) \;\mathbb{E}_{\pi \sim p_\theta}[g(\pi)] \\
      =\;& \mathbb{E}_{\pi \sim p_\theta}\!\Big[g(\pi)\,\nabla_\theta \Phi_\theta(\pi)\Big] \;-\;\mathbb{E}_{\pi \sim p_\theta}\!\big[\nabla_\theta \Phi_\theta(\pi)\big] \; \mathbb{E}_{\pi \sim p_\theta}[g(\pi)]  \\
      =\;& \mathbb{E}_{\pi \sim p_\theta}\!\Big[\big(g(\pi) - \mathbb{E}_{\hat\pi \sim p_\theta}[g(\hat\pi)]\big)\,\nabla_\theta \Phi_\theta(\pi)\Big]~,
  \end{align*}
  where in the second line the constant $\nabla_\theta \log Z_\theta$ is pulled out. The final expression shows that 
  \begin{align*}
      &\mathbb{E}_{\pi \sim p_\theta}\!\big[g(\pi)\,\nabla_\theta\log p_\theta(\pi)\big]\\
    \;=\; &\mathbb{E}_{\pi \sim p_\theta}\!\Big[(g(\pi) - \mathbb{E}_{p_\theta}[g(\pi)])\,\nabla_\theta \Phi_\theta(\pi)\Big]~.
  \end{align*}

  \textbf{(Unbiasedness)} Now let $\hat{G}_N$ denote the proposed estimator (\ref{eq:gd-est}) based on $N$ i.i.d. samples. Writing it out:
  \[
    \hat{G}_N \;:=\; \frac{1}{\,N-1\,}\sum_{i=1}^N \Big(g(\pi_i) - \frac{1}{N}\sum_{j=1}^N g(\pi_j)\Big)\;\nabla_\theta \Phi_\theta(\pi_i)~,
  \] 
  where $\pi_1,\dots,\pi_N \overset{\text{i.i.d.}}{\sim} p_\theta$. We will show $\mathbb{E}[\hat{G}_N]$ equals the target quantity above for any $N$, which establishes unbiasedness.

  Using linearity of expectation and the independence of samples, we can expand $\mathbb{E}[\hat{G}_N]$ as follows:
  \begin{align*}
    \mathbb{E}[\hat{G}_N] &= \frac{1}{\,N-1\,}\sum_{i=1}^N \mathbb{E}\Big[\big(g(\pi_i) - \frac{1}{N}\sum_{j=1}^N g(\pi_j)\big)\,\nabla_\theta \Phi_\theta(\pi_i)\Big] \\
    &= \frac{1}{\,N-1\,}\sum_{i=1}^N \Big( \mathbb{E}[g(\pi_i)\,\nabla_\theta \Phi_\theta(\pi_i)] \;-\; \mathbb{E}\Big[\frac{1}{N}\sum_{j=1}^N g(\pi_j)\,\nabla_\theta \Phi_\theta(\pi_i)\Big]\Big)~.
  \end{align*}
  For each fixed $i$, we evaluate the two expectations inside the sum:
  \begin{itemize}\setlength{\itemsep}{0pt}
    \item $\mathbb{E}[g(\pi_i)\,\nabla_\theta \Phi_\theta(\pi_i)] = \mathbb{E}_{\pi\sim p_\theta}[g(\pi)\,\nabla_\theta \Phi_\theta(\pi)]$, since $\pi_i$ has the same distribution as a fresh draw $\pi \sim p_\theta$.
    \item $\displaystyle \mathbb{E}\Big[\frac{1}{N}\sum_{j=1}^N g(\pi_j)\,\nabla_\theta \Phi_\theta(\pi_i)\Big] 
      = \frac{1}{N}\sum_{j=1}^N \mathbb{E}[\,g(\pi_j)\,\nabla_\theta \Phi_\theta(\pi_i)\,]$  \\
      $\quad= \frac{1}{N}\Big(\mathbb{E}[g(\pi_i)\,\nabla_\theta \Phi_\theta(\pi_i)] + \sum_{j\neq i}\mathbb{E}[g(\pi_j)\,\nabla_\theta \Phi_\theta(\pi_i)]\Big)$.
  \end{itemize}
  For $j \neq i$, $\pi_j$ is independent of $\pi_i$, so $\mathbb{E}[g(\pi_j)\,\nabla_\theta \Phi_\theta(\pi_i)] = \mathbb{E}[g(\pi_j)]\,\mathbb{E}[\nabla_\theta \Phi_\theta(\pi_i)] = \mathbb{E}_{p_\theta}[g(\pi)]\,\mathbb{E}_{p_\theta}[\nabla_\theta \Phi_\theta(\pi)]$. Therefore, continuing the calculation:
  \begin{align*}
    \mathbb{E}[\hat{G}_N] &= \frac{1}{\,N-1\,}\sum_{i=1}^N \Big( \mathbb{E}[g(\pi)\,\nabla_\theta \Phi_\theta(\pi)] \;-\; \frac{1}{N}\Big[\mathbb{E}[g(\pi)\,\nabla_\theta \Phi_\theta(\pi)] + (N-1)\,\mathbb{E}[g(\pi)]\,\mathbb{E}[\nabla_\theta \Phi_\theta(\pi)]\Big]\Big) \\
      &= \frac{1}{\,N-1\,}\sum_{i=1}^N \Big( \mathbb{E}[g(\pi)\,\nabla_\theta \Phi_\theta(\pi)] - \frac{1}{N}\mathbb{E}[g(\pi)\,\nabla_\theta \Phi_\theta(\pi)] - \frac{N-1}{N}\,\mathbb{E}[g(\pi)]\,\mathbb{E}[\nabla_\theta \Phi_\theta(\pi)]\Big) \\
      &= \frac{1}{\,N-1\,}\sum_{i=1}^N \frac{N-1}{N}\Big(\mathbb{E}[g(\pi)\,\nabla_\theta \Phi_\theta(\pi)] - \mathbb{E}[g(\pi)]\,\mathbb{E}[\nabla_\theta \Phi_\theta(\pi)]\Big) \\
      &= \mathbb{E}[g(\pi)\,\nabla_\theta \Phi_\theta(\pi)] \;-\; \mathbb{E}[g(\pi)]\,\mathbb{E}[\nabla_\theta \Phi_\theta(\pi)] \\
      &= \mathbb{E}_{\pi\sim p_\theta}\!\Big[(g(\pi) - \mathbb{E}_{p_\theta}[g(\pi)])\,\nabla_\theta \Phi_\theta(\pi)\Big]~.
  \end{align*}
  Recall that this quantity is exactly $\mathbb{E}_{\pi\sim p_\theta}[\,g(\pi)\,\nabla_\theta \log p_\theta(\pi)\,]$.  Therefore $\mathbb{E}[\hat{G}_N]$ equals the desired one. In other words, $\hat{G}_N$ is an \emph{unbiased} estimator of $\mathbb{E}_{\pi\sim p_\theta}[g(\pi)\,\nabla_\theta \log p_\theta(\pi)]$.

  \medskip\noindent\textbf{(Consistency)} Finally, we show that $\hat{G}_N$ converges to the true value as $N\to\infty$. Since $\Pi_n$ is a finite set, $g$ and $\Phi_\theta$ are obviously bounded and then $\mathbb E|g(\pi)|<+\infty$, $\mathbb E|\nabla \Phi_\theta(\pi)|<+\infty$ and  $\mathbb E|g(\pi)\nabla \Phi_\theta(\pi)|<+\infty$. By the Strong Law of Large Numbers (SLLN), the sample averages converge to their expectations almost surely. In particular, as $N\to\infty$ we have 
  \[
    \frac{1}{N}\sum_{i=1}^N g(\pi_i) \;\xrightarrow{\text{a.s.}}\; \mathbb{E}_{p_\theta}[g(\pi)], \qquad 
    \frac{1}{N}\sum_{i=1}^N \nabla_\theta \Phi_\theta(\pi_i) \;\xrightarrow{\text{a.s.}}\; \mathbb{E}_{p_\theta}[\nabla_\theta \Phi_\theta(\pi)],
  \] 
  and 
  \[
    \frac{1}{N}\sum_{i=1}^N g(\pi_i)\,\nabla_\theta \Phi_\theta(\pi_i) \;\xrightarrow{\text{a.s.}}\; \mathbb{E}_{p_\theta}[\,g(\pi)\,\nabla_\theta \Phi_\theta(\pi)\,]~,
  \] 
  provided the assumed moment conditions hold. Notice that we can rewrite the estimator in a form convenient for taking the limit:
  \[
    \hat{G}_N \;=\; \frac{N}{\,N-1\,}\Bigg(\frac{1}{N}\sum_{i=1}^N g(\pi_i)\,\nabla_\theta \Phi_\theta(\pi_i) \;-\; \Big(\frac{1}{N}\sum_{i=1}^N g(\pi_i)\Big)\Big(\frac{1}{N}\sum_{i=1}^N \nabla_\theta \Phi_\theta(\pi_i)\Big)\Bigg)~.
  \] 
  As $N\to\infty$, the prefactor $\frac{N}{N-1}\to 1$, and each of the big parentheses converges to the corresponding expectation. Hence, by the above limits and continuous mapping, $\hat{G}_N$ converges almost surely to 
  \[
    \mathbb{E}_{p_\theta}[\,g(\pi)\,\nabla_\theta \Phi_\theta(\pi)\,] - \mathbb{E}_{p_\theta}[g(\pi)]\,\mathbb{E}_{p_\theta}[\nabla_\theta \Phi_\theta(\pi)] \;=\; \mathbb{E}_{p_\theta}[\,g(\pi)\,\nabla_\theta \log p_\theta(\pi)\,]~,
  \] 
  which is the true parameter value we seek. This shows that $\hat{G}_N$ is a \emph{consistent} estimator. 
  
  In summary, $\hat{G}_N$ is unbiased and $\hat{G}_N \to \mathbb{E}_{p_\theta}[g(\pi)\,\nabla_\theta \log p_\theta(\pi)]$ almost surely as $N\to\infty$.
\end{proof}

\section{Supplementary Implementation Details}

\subsection{Dataset Details}
\label{subsec:datasets}
An instance of the Quadratic Assignment Problem (QAP) is defined by two $n \times n$ matrices: a distance matrix $D$ and a flow matrix $F$. We use both synthetic datasets and widely-used real-world benchmarks to thoroughly evaluate our model.

\textbf{Synthetic Datasets}
We generate two distinct classes of synthetic instances to assess performance on both structured and unstructured problems.
\begin{itemize}
    \item \textbf{Geometrically Structured instances (SAWT style):} Following the generation procedure from SAWT~\cite{tanlearning}, we create instances with geometric structure. The distance matrix $D$ is a 2D-Euclidean distance matrix derived from $n$ coordinates sampled uniformly from the unit square $[0,1]^2$. The flow matrix $F$ is initially sampled from $\mathcal{U}[0,1]$, made symmetric, and then sparsified by randomly setting 70\% of off-diagonal elements to zero.
    \item \textbf{Uniformly Random Instances (Tai-a style):} To assess performance on unstructured problems, we generate instances following the `Tai-a' specification from QAPLIB. The elements of both $D$ and $F$ are sampled independently and identically distributed (i.i.d.) from a uniform distribution $\mathcal{U}[0,1]$ and are then made symmetric.  
\end{itemize}

\textbf{Real-World Benchmarks}
We also test our model on two widely used and challenging benchmarks, QAPLIB and Taixxeyy.
\begin{itemize}
    \item \textbf{QAPLIB:} QAPLIB~\cite{burkard1997qaplib} is a widely used benchmark library for the Quadratic Assignment Problem (QAP). It contains 134 instances drawn from 15 categories, covering a broad range of structural characteristics and application backgrounds. Each instance is typically named in the format of “author name – problem size – index (optional)”. According to their generation mechanisms, the instances in QAPLIB can be broadly grouped into three classes: (i) unstructured random instances, in which both the flow matrix and the distance matrix are generated element-wise at random, such as the Tai-a and Rou series; (ii) grid-based distance instances, where the distance matrix is defined by the Manhattan or Euclidean distances among facilities located on a grid, as in the Sko, Nug, and Wil series; and (iii) real-life and real-like instances, which arise from practical applications or realistic modeling scenarios, such as keyboard design (Bur) and circuit layout (Esc). This diversity has made QAPLIB a standard benchmark for evaluating the performance and generalization ability of QAP algorithms.
    \item \textbf{Taixxeyy:} The Taixxeyy benchmark \cite{drezner2005taie} comprises a suite of QAP instances where `xx' in the name denotes the problem size and `yy' the instance number. These instances are specifically engineered to be difficult for heuristics that rely on transposition-based local search. They are generated with a recursive, hierarchical block structure, creating clusters of facilities with high intra-group flows and small intra-group distances, but low flows and large distances between groups.
    To make the problem challenging, this block structure is intentionally obscured with small inter-block flows and large finite distances. This creates a highly rugged solution landscape where meaningful improvement often requires accepting a series of deteriorating solutions.
\end{itemize}

\subsection{Baseline Details}\label{apx:subsec-implementation-baseline}
We evaluate our method against a range of baselines, from classic heuristics to modern learning-based approaches. We first distinguish between the two primary formulations of the Quadratic Assignment Problem (QAP). The general form, Lawler's QAP, is defined by a quadratic matrix $K$ with the following objective:
\begin{align*}
    &\min_{X} \quad \operatorname{vec}(X)^\top K \operatorname{vec}(X), \\
    &\text{s.t.} \quad X \in \{0,1\}^{n \times n}, \; X 1_{n} = 1_{n}, \; X^\top1_{n} = 1_{n}
\end{align*}
A widely-used special case is the Koopmans-Beckmann's QAP, where the cost is determined by flow ($F$) and distance ($D$) matrices, formulated as:
\begin{align*}
    &\min_{X} \quad \text{trace}(F^{\top}XDX^{\top}), \\
    &\text{s.t.} \quad X \in \{0,1\}^{n \times n}, \; X 1_{n} = 1_{n}, \; X^\top1_{n} = 1_{n}
\end{align*}
The Koopmans-Beckmann formulation is a special case of Lawler's QAP where $K = D^{\top} \otimes F$. Our method, PLMA, is designed specifically for this latter formulation.

\textbf{Ro-TS}~\cite{Taillard1991} is a highly optimized tabu search heuristic specifically for the Koopmans-Beckmann's QAP. We use the official C implementation from Éric Taillard’s website\footnote{\url{https://mistic.iict-heig-vd.ch/taillard/codes.dir/tabou_qap2.c}}.

\textbf{BMA}~\cite{BMA} is a powerful modern heuristic that integrates local search with evolutionary strategies. We use the official C implementation from Jin-Kao Hao's website\footnote{\url{https://leria-info.univ-angers.fr/~jinkao.hao/BMA.html}}.

\textbf{C-SA}~\cite{connolly1990SA} is a simulated annealing algorithm proposed by Connolly, featuring an improved annealing scheme for the QAP that performs well on a range of benchmark instances with modest computational effort. We use the C implementation from Éric Taillard’s website\footnote{\url{https://mistic.iict-heig-vd.ch/taillard/codes.dir/sa_qap.cpp}}.

\textbf{IPFP}~\cite{Leordeanu2005ipfp} is an iterative algorithm that seeks a good discrete solution by optimizing a continuous relaxation. While initially designed for Lawler's QAP, we use an improved version tailored for the Koopmans-Beckmann's QAP, as detailed in Algorithm~\ref{alg:ipfp}. To ensure comparable computation time, we run IPFP with a batch of random initializations and report the best solution found.

\begin{algorithm}[H]
\caption{IPFP for Koopmans-Beckmann's QAP}
\label{alg:ipfp}
\begin{algorithmic}[1]
\Require Initial permutation matrix $X_0$, number of max iterations $K$, convergence tolerance $\text{tol}$.
\State Initialize best solution $X^* \leftarrow X_0$.
\For{$k=0, \dots, K-1$}
    \State Calculate the gradient $\nabla f(X_k) = FX_kD^\top+F^\top X_kD$.
    \State Let $B_k \leftarrow \arg\min_{B \in \Pi_n} \langle B, \nabla f(X_k) \rangle$ via Hungarian algorithm.
    \State Let $a=\langle F,(B_k-X_k)D(B_k^{\top}-X_k^{\top})\rangle,\; b=\langle F,(B_k-X_k)DX_k^{\top}+X_kD(B_k^{\top}-X_k^{\top})\rangle$.
    \If{$a>0$}
        \State $t_k=\min(-\frac{b}{2a},1)$
    \Else
        \State $t_k=1$
    \EndIf
    
    \State Update $X_{k+1} \leftarrow X_k + t_k(B_k - X_k)$
    \If{$f(B_k) < f(X^*)$}
        \State $X^* \leftarrow B_k$
    \EndIf
    \If{$\|X_{k+1} - X_k\|_F \le \text{tol}$}
        \State \textbf{break}
    \EndIf
\EndFor
\State \Return The best found solution $X^*$.
\end{algorithmic}
\end{algorithm}

\textbf{SM}~\cite{LeordeanuH05sm} solves the Lawler's QAP by constructing an association graph and applying spectral methods to find the principal eigenvector, which is then discretized to obtain the final matching.

\textbf{RRWM}~\cite{ChoLL10rrwm} also addresses the Lawler's QAP. It models the matching problem as a random walk on the association graph, introducing a re-weighting scheme to better preserve matching constraints during the walk.

\textbf{SAWT}~\cite{tanlearning} is a reinforcement learning method that learns to improve solutions for the Koopmans-Beckmann's QAP. A key distinction of SAWT is that it requires 2D coordinates for the locations as input, which limits its direct applicability to datasets where the distance matrix $D$ is not derived from Euclidean distances. Since the authors did not provide a canonical way to handle this, we adapted the implementation for the QAPLIB dataset by using Multi-dimensional Scaling (MDS) for symmetric matrices and Principal Component Analysis (PCA) for asymmetric matrices to generate the required 2D coordinates. For large-scale instances, we trained a new model with a larger initialization size ($N\_ init=512$) to avoid the limitations of the provided pre-trained models ($N\_ init=128$).

\textbf{NGM}~\cite{wang2021neural}
approaches the problem by framing it as a vertex classification task on the association graph. We include NGM as a representative of learning-based methods that operate on the $n^2$ association graph, which, as noted in our main paper, can face scalability challenges.

\textbf{AR-Seq} uses an auto-regressive model and generates solutions by sequentially constructive sampling. Given a heatmap $\phi=\phi(\theta,\mathcal{P})\in\mathbb{R}^{n\times n}$, our method constructs an energy-based model. As an alternative, one can define an auto-regressive model $q_{\theta}$ as follows
\begin{equation*}
    q_{\theta}(\pi):= \prod_{i=1}^n\frac{\exp(\phi_{i,\pi(i)})}{\sum\limits_{j=i}^{n}\exp(\phi_{i,\pi(j)})}.
\end{equation*}
This factorization yields a sequence of conditional probabilities and therefore admits sequential constructive sampling. Owing to its ease of sampling, this formulation is widely used in prior active search methods. 

\section{Supplementary Experimental Details}
\subsection{Model Architecture Details}
\label{subsec:architecture_details}

Our cross-graph attention network consists of a series of graph neural network layers followed by cross-attention blocks. The specific architectural parameters are listed in Table~\ref{tab:architecture}.

\begin{table}[t]
\caption{Model architecture parameters for PLMA.}
\label{tab:architecture}
\centering
\begin{tabular}{lc}
\toprule
\textbf{Parameter} & \textbf{Value} \\
\midrule
Initial Feature Dimension ($d_{in}$) & 16 \\
Node Embedding Dimension ($d$) & 256 \\
GNN Layers ($l_1$) & 10 \\
Cross-Attention Blocks & 1 \\
Attention Heads per Block & 8 \\
Log-Sinkhorn Iterations & 1 \\
\bottomrule
\end{tabular}
\end{table}

\subsection{Training and Finetuning}
All models were trained using the PyTorch framework. The hyperparameters used for pre-training and warm-started MCMC finetuning are detailed in Table~\ref{tab:hyperparameters}. 

\begin{table}[t]
\caption{Hyperparameters for PLMA.}
\label{tab:hyperparameters}
\centering
\begin{tabular}{lcc}
\toprule
\textbf{Parameter} & \textbf{Pre-training} & \textbf{Finetuning} \\
\midrule
Optimizer & Adam & Adam \\
Learning Rate & 1e-4 & 1e-4 \\
Batch Size & 64 & 256 \\
Training Steps & 468,900 & -- \\
Total Finetuning Steps ($T$) & -- & 200 \\
Starting Points ($K$) & 400 & 20 \\
Chains per Point ($M$) & - & 20 \\
Chain Length ($L$) & $n$ & $\lfloor n/3 \rfloor$ \\
Local Search Iterations ($T_{LS}$) & 1 & $n$ \\
\bottomrule
\end{tabular}
\end{table}

\section{Supplementary Additional Results}
\begin{table}[!h]
\caption{Efficiency comparison on the uniformly random dataset with $n=100$. PLMA (instance-wise) runs the same finetuning independently on each test instance without batching.}
\label{tab:sequential}
\centering
\footnotesize
\setlength{\tabcolsep}{12pt}
\begin{tabular}{l c c c}
\toprule
Algorithm & Cost & Gap & Time \\
\midrule
Ro-TS (1k) & 2195.98 & 0.13\% & 38m59s \\
Ro-TS (5k) & 2193.16 & 0.00\% & 3h15m \\
\textbf{PLMA (instance-wise)} & 2193.96& 0.05\%& 31m15s\\
\textbf{PLMA} & 2193.13 & 0.00\% & 14m1s \\
\bottomrule
\end{tabular}
\end{table}

\subsection{Efficiency of Batched Finetuning}\label{apx:subsec-batch-wise}
In Table~\ref{tab:sequential}, we compare the efficiency of the batch-wise and instance-wise variants of PLMA on the uniformly random dataset with $n=100$ and 
200 finetuning steps. The batched configuration achieves a substantial reduction in runtime relative to the instance-wise variant. Despite sharing one set of network parameters across instances during updates, the batched variant preserves solution quality at the level of the strongest baseline. These observations support batched warm-started finetuning as an effective deployment-time adaptation strategy that improves computational efficiency without compromising final quality. Moreover, the instance-wise variant remains efficient in its own right and outperforms Ro-TS in runtime under comparable settings while delivering near-optimal solutions.

\subsection{Detailed Results on QAPLIB}
For a comprehensive evaluation, we present a detailed breakdown of our model's performance on each instance of the QAPLIB benchmark in Table \ref{tab:qaplib_details}. To account for the stochastic nature of the solver, each algorithm was executed for 10 independent runs. The table lists the minimum, mean, and maximum gaps and average computation time across the 10 runs.

\subsection{Detailed Results on Taixxeyy}
We also present the detailed results for the challenging Taixxeyy datasets. As noted by \cite{drezner2005taie},  the variance of the solutions can be very high for these instances, making it injudicious to rely solely on the average and standard deviation of solution values. In Table~\ref{tab:taixxeyy_details}, we present the minimum, mean, and maximum gaps and average computation time across the 10 runs for each method.
\clearpage
\onecolumn
{
\fontsize{7.5pt}{10.5pt}\selectfont
\setlength{\tabcolsep}{3pt}
\begin{longtable}{l cc cc cc cc cc}
\caption{Detailed QAPLIB results. Gap is reported as (min, mean, max) over 10 runs; time is the average runtime (s).} \label{tab:qaplib_details} \\
\toprule
\textbf{Problem} & \multicolumn{2}{c}{\textbf{Ro-TS}} & \multicolumn{2}{c}{\textbf{BMA}} & \multicolumn{2}{c}{\textbf{IPFP}} & \multicolumn{2}{c}{\textbf{SAWT}} & \multicolumn{2}{c}{\textbf{PLMA}} \\
\cmidrule(lr){2-3} \cmidrule(lr){4-5} \cmidrule(lr){6-7} \cmidrule(lr){8-9} \cmidrule(lr){10-11}
\textbf{Name} & Gap & Time & Gap & Time & Gap & Time & Gap & Time & Gap & Time \\
\midrule
\endfirsthead

\multicolumn{11}{c}
{{\tablename\ \thetable{} -- continued from previous page}} \\
\toprule
\textbf{Problem} & \multicolumn{2}{c}{\textbf{Ro-TS}} & \multicolumn{2}{c}{\textbf{BMA}} & \multicolumn{2}{c}{\textbf{IPFP}} & \multicolumn{2}{c}{\textbf{SAWT}} & \multicolumn{2}{c}{\textbf{PLMA}} \\
\cmidrule(lr){2-3} \cmidrule(lr){4-5} \cmidrule(lr){6-7} \cmidrule(lr){8-9} \cmidrule(lr){10-11}
\textbf{Name} & Gap & Time & Gap & Time & Gap & Time & Gap & Time & Gap & Time \\
\midrule
\endhead

\bottomrule
\endfoot

bur26a & (0.00, \textbf{0.00}, 0.00) & 0.16 & (0.00, \textbf{0.00}, 0.00) & 0.04 & (0.04, 0.10, 0.14) & 0.38 & (2.81, 3.74, 4.45) & 14.77 & (0.00, \textbf{0.00}, 0.00) & 0.08 \\
bur26b & (0.00, \textbf{0.00}, 0.00) & 0.17 & (0.00, \textbf{0.00}, 0.00) & 0.05 & (0.01, 0.06, 0.23) & 0.40 & (3.27, 3.81, 4.63) & 13.90 & (0.00, \textbf{0.00}, 0.00) & 0.08 \\
bur26c & (0.00, \textbf{0.00}, 0.00) & 0.06 & (0.00, \textbf{0.00}, 0.00) & 0.08 & (0.01, 0.03, 0.04) & 0.37 & (2.43, 4.08, 4.83) & 13.89 & (0.00, \textbf{0.00}, 0.00) & 0.08 \\
bur26d & (0.00, \textbf{0.00}, 0.00) & 0.16 & (0.00, \textbf{0.00}, 0.00) & 0.05 & (0.01, 0.03, 0.05) & 0.40 & (3.78, 4.41, 5.08) & 13.87 & (0.00, 0.00, 0.00) & 0.10 \\
bur26e & (0.00, \textbf{0.00}, 0.00) & 0.05 & (0.00, \textbf{0.00}, 0.00) & 0.06 & (0.01, 0.02, 0.04) & 0.23 & (2.73, 4.37, 5.48) & 13.90 & (0.00, \textbf{0.00}, 0.00) & 0.09 \\
bur26f & (0.00, \textbf{0.00}, 0.00) & 0.13 & (0.00, \textbf{0.00}, 0.00) & 0.03 & (0.02, 0.02, 0.03) & 0.33 & (4.03, 4.96, 6.21) & 13.89 & (0.00, \textbf{0.00}, 0.00) & 0.07 \\
bur26g & (0.00, \textbf{0.00}, 0.00) & 0.12 & (0.00, \textbf{0.00}, 0.00) & 0.06 & (0.04, 0.06, 0.10) & 0.25 & (2.95, 3.63, 4.55) & 13.88 & (0.00, 0.00, 0.00) & 0.08 \\
bur26h & (0.00, \textbf{0.00}, 0.00) & 0.12 & (0.00, \textbf{0.00}, 0.00) & 0.04 & (0.02, 0.04, 0.06) & 0.25 & (2.62, 3.91, 4.90) & 13.88 & (0.00, \textbf{0.00}, 0.00) & 0.08 \\
chr12a & (0.00, \textbf{0.00}, 0.00) & 0.00 & (0.00, \textbf{0.00}, 0.00) & 0.00 & (3.81, 8.62, 18.40) & 0.21 & (45.35, 88.99, 116.21) & 13.32 & (0.00, \textbf{0.00}, 0.00) & 0.05 \\
chr12b & (0.00, \textbf{0.00}, 0.00) & 0.00 & (0.00, \textbf{0.00}, 0.00) & 0.00 & (0.00, \textbf{0.00}, 0.00) & 0.19 & (43.73, 88.49, 126.24) & 13.32 & (0.00, \textbf{0.00}, 0.00) & 0.04 \\
chr12c & (0.00, \textbf{0.00}, 0.00) & 0.00 & (0.00, \textbf{0.00}, 0.00) & 0.01 & (3.68, 12.58, 19.90) & 0.23 & (29.69, 67.48, 89.71) & 13.23 & (0.00, \textbf{0.00}, 0.00) & 0.08 \\
chr15a & (0.00, \textbf{0.00}, 0.00) & 0.03 & (0.00, \textbf{0.00}, 0.00) & 0.01 & (4.81, 16.94, 37.45) & 0.25 & (121.79, 158.68, 188.20) & 13.29 & (0.00, \textbf{0.00}, 0.00) & 0.09 \\
chr15b & (0.00, \textbf{0.00}, 0.00) & 0.01 & (0.00, \textbf{0.00}, 0.00) & 0.01 & (7.96, 13.81, 20.35) & 0.16 & (142.35, 198.93, 251.41) & 13.20 & (0.00, \textbf{0.00}, 0.00) & 0.11 \\
chr15c & (0.00, \textbf{0.00}, 0.00) & 0.02 & (0.00, \textbf{0.00}, 0.00) & 0.01 & (16.06, 29.30, 41.37) & 0.26 & (127.27, 182.16, 229.00) & 13.22 & (0.00, \textbf{0.00}, 0.00) & 0.15 \\
chr18a & (0.00, \textbf{0.00}, 0.00) & 0.02 & (0.00, \textbf{0.00}, 0.00) & 0.05 & (0.18, 21.75, 30.10) & 0.23 & (208.27, 247.32, 288.23) & 13.48 & (0.00, \textbf{0.00}, 0.00) & 0.20 \\
chr18b & (0.00, \textbf{0.00}, 0.00) & 0.00 & (0.00, \textbf{0.00}, 0.00) & 0.00 & (1.17, 4.45, 10.17) & 0.31 & (74.97, 86.31, 90.74) & 13.42 & (0.00, \textbf{0.00}, 0.00) & 0.05 \\
chr20a & (0.00, 0.84, 1.46) & 0.26 & (0.00, 0.18, 1.82) & 0.10 & (6.39, 17.01, 23.45) & 0.24 & (137.04, 170.41, 198.91) & 13.47 & (0.00, \textbf{0.00}, 0.00) & 0.43 \\
chr20b & (0.00, 2.35, 4.87) & 0.29 & (2.00, 2.56, 3.05) & 0.24 & (11.05, 16.18, 19.41) & 0.27 & (156.31, 177.75, 187.12) & 13.49 & (0.00, \textbf{0.00}, 0.00) & 1.11 \\
chr20c & (0.00, \textbf{0.00}, 0.00) & 0.13 & (0.00, \textbf{0.00}, 0.00) & 0.11 & (0.00, 12.02, 21.37) & 0.13 & (164.32, 258.89, 322.98) & 13.50 & (0.00, \textbf{0.00}, 0.00) & 0.62 \\
chr22a & (0.00, 0.10, 0.32) & 0.33 & (0.00, \textbf{0.00}, 0.00) & 0.07 & (9.81, 12.36, 14.81) & 0.40 & (44.31, 56.17, 67.90) & 13.48 & (0.00, \textbf{0.00}, 0.00) & 0.32 \\
chr22b & (0.00, 0.69, 1.10) & 0.44 & (0.00, \textbf{0.00}, 0.00) & 0.08 & (12.98, 15.00, 17.79) & 0.38 & (46.04, 53.95, 60.80) & 13.46 & (0.00, \textbf{0.00}, 0.00) & 0.56 \\
chr25a & (0.00, 2.80, 5.69) & 0.63 & (1.84, 3.74, 6.38) & 0.40 & (18.07, 28.65, 44.47) & 0.39 & (198.16, 242.87, 266.49) & 13.55 & (0.00, \textbf{0.00}, 0.00) & 0.50 \\
els19 & (0.00, \textbf{0.00}, 0.00) & 0.02 & (0.00, \textbf{0.00}, 0.00) & 0.03 & (1.44, 10.18, 33.73) & 0.36 & (47.37, 47.37, 47.37) & 13.37 & (0.00, \textbf{0.00}, 0.00) & 0.09 \\
esc128 & (0.00, \textbf{0.00}, 0.00) & 10.30 & (0.00, \textbf{0.00}, 0.00) & 0.05 & (0.00, \textbf{0.00}, 0.00) & 4.63 & (125.00, 182.81, 206.25) & 21.46 & (0.00, \textbf{0.00}, 0.00) & 0.13 \\
esc16a & (0.00, \textbf{0.00}, 0.00) & 0.00 & (0.00, \textbf{0.00}, 0.00) & 0.00 & (0.00, \textbf{0.00}, 0.00) & 0.29 & (5.88, 17.06, 26.47) & 13.11 & (0.00, \textbf{0.00}, 0.00) & 0.05 \\
esc16b & (0.00, \textbf{0.00}, 0.00) & 0.00 & (0.00, \textbf{0.00}, 0.00) & 0.00 & (0.00, \textbf{0.00}, 0.00) & 0.37 & (0.00, 0.27, 0.68) & 13.10 & (0.00, \textbf{0.00}, 0.00) & 0.04 \\
esc16c & (0.00, \textbf{0.00}, 0.00) & 0.00 & (0.00, \textbf{0.00}, 0.00) & 0.00 & (0.00, \textbf{0.00}, 0.00) & 0.37 & (5.00, 14.50, 21.25) & 13.11 & (0.00, \textbf{0.00}, 0.00) & 0.04 \\
esc16d & (0.00, \textbf{0.00}, 0.00) & 0.00 & (0.00, \textbf{0.00}, 0.00) & 0.00 & (0.00, \textbf{0.00}, 0.00) & 0.28 & (25.00, 43.75, 62.50) & 13.10 & (0.00, \textbf{0.00}, 0.00) & 0.04 \\
esc16e & (0.00, \textbf{0.00}, 0.00) & 0.00 & (0.00, \textbf{0.00}, 0.00) & 0.00 & (0.00, \textbf{0.00}, 0.00) & 0.24 & (14.29, 17.14, 21.43) & 13.09 & (0.00, \textbf{0.00}, 0.00) & 0.04 \\
esc16g & (0.00, \textbf{0.00}, 0.00) & 0.00 & (0.00, \textbf{0.00}, 0.00) & 0.00 & (0.00, \textbf{0.00}, 0.00) & 0.19 & (7.69, 14.62, 15.38) & 13.08 & (0.00, \textbf{0.00}, 0.00) & 0.04 \\
esc16h & (0.00, \textbf{0.00}, 0.00) & 0.00 & (0.00, \textbf{0.00}, 0.00) & 0.00 & (0.00, \textbf{0.00}, 0.00) & 0.24 & (1.61, 6.12, 8.84) & 13.10 & (0.00, \textbf{0.00}, 0.00) & 0.04 \\
esc16i & (0.00, \textbf{0.00}, 0.00) & 0.00 & (0.00, \textbf{0.00}, 0.00) & 0.00 & (0.00, \textbf{0.00}, 0.00) & 0.19 & (28.57, 65.71, 114.29) & 13.09 & (0.00, \textbf{0.00}, 0.00) & 0.04 \\
esc16j & (0.00, \textbf{0.00}, 0.00) & 0.00 & (0.00, \textbf{0.00}, 0.00) & 0.00 & (0.00, \textbf{0.00}, 0.00) & 0.09 & (0.00, 37.50, 50.00) & 13.08 & (0.00, \textbf{0.00}, 0.00) & 0.04 \\
esc32a & (0.00, \textbf{0.00}, 0.00) & 0.26 & (0.00, \textbf{0.00}, 0.00) & 0.02 & (3.08, 5.38, 7.69) & 0.42 & (138.46, 157.08, 173.85) & 13.74 & (0.00, \textbf{0.00}, 0.00) & 0.33 \\
esc32b & (0.00, \textbf{0.00}, 0.00) & 0.07 & (0.00, \textbf{0.00}, 0.00) & 0.00 & (0.00, 0.95, 9.52) & 0.22 & (90.48, 90.48, 90.48) & 13.71 & (0.00, \textbf{0.00}, 0.00) & 0.09 \\
esc32c & (0.00, \textbf{0.00}, 0.00) & 0.00 & (0.00, \textbf{0.00}, 0.00) & 0.00 & (0.00, \textbf{0.00}, 0.00) & 0.54 & (7.17, 10.09, 12.15) & 13.73 & (0.00, \textbf{0.00}, 0.00) & 0.05 \\
esc32d & (0.00, \textbf{0.00}, 0.00) & 0.02 & (0.00, \textbf{0.00}, 0.00) & 0.00 & (0.00, \textbf{0.00}, 0.00) & 0.41 & (28.00, 33.50, 43.00) & 13.73 & (0.00, \textbf{0.00}, 0.00) & 0.05 \\
esc32e & (0.00, \textbf{0.00}, 0.00) & 0.00 & (0.00, \textbf{0.00}, 0.00) & 0.00 & (0.00, \textbf{0.00}, 0.00) & 0.30 & (0.00, \textbf{0.00}, 0.00) & 13.73 & (0.00, \textbf{0.00}, 0.00) & 0.05 \\
esc32g & (0.00, \textbf{0.00}, 0.00) & 0.00 & (0.00, \textbf{0.00}, 0.00) & 0.00 & (0.00, \textbf{0.00}, 0.00) & 0.25 & (0.00, 16.67, 66.67) & 13.72 & (0.00, \textbf{0.00}, 0.00) & 0.05 \\
esc32h & (0.00, \textbf{0.00}, 0.00) & 0.05 & (0.00, \textbf{0.00}, 0.00) & 0.02 & (0.46, 0.73, 0.91) & 0.60 & (19.18, 21.87, 26.03) & 13.76 & (0.00, \textbf{0.00}, 0.00) & 0.05 \\
esc64a & (0.00, \textbf{0.00}, 0.00) & 0.42 & (0.00, \textbf{0.00}, 0.00) & 0.01 & (0.00, \textbf{0.00}, 0.00) & 1.69 & (43.10, 74.14, 101.72) & 19.00 & (0.00, \textbf{0.00}, 0.00) & 0.07 \\
had12 & (0.00, \textbf{0.00}, 0.00) & 0.00 & (0.00, \textbf{0.00}, 0.00) & 0.00 & (0.24, 0.27, 0.36) & 0.33 & (3.51, 4.78, 6.17) & 13.02 & (0.00, \textbf{0.00}, 0.00) & 0.04 \\
had14 & (0.00, \textbf{0.00}, 0.00) & 0.00 & (0.00, \textbf{0.00}, 0.00) & 0.00 & (0.00, \textbf{0.00}, 0.00) & 0.33 & (3.16, 5.90, 7.56) & 13.07 & (0.00, \textbf{0.00}, 0.00) & 0.05 \\
had16 & (0.00, \textbf{0.00}, 0.00) & 0.01 & (0.00, \textbf{0.00}, 0.00) & 0.00 & (0.05, 0.06, 0.11) & 0.37 & (3.82, 5.61, 6.51) & 13.12 & (0.00, \textbf{0.00}, 0.00) & 0.04 \\
had18 & (0.00, \textbf{0.00}, 0.00) & 0.00 & (0.00, \textbf{0.00}, 0.00) & 0.01 & (0.00, 0.01, 0.04) & 0.46 & (4.07, 5.16, 6.20) & 13.36 & (0.00, \textbf{0.00}, 0.00) & 0.05 \\
had20 & (0.00, \textbf{0.00}, 0.00) & 0.01 & (0.00, \textbf{0.00}, 0.00) & 0.00 & (0.03, 0.05, 0.06) & 0.45 & (4.13, 5.44, 7.22) & 13.44 & (0.00, \textbf{0.00}, 0.00) & 0.06 \\
kra30a & (0.00, \textbf{0.00}, 0.00) & 0.37 & (0.00, \textbf{0.00}, 0.00) & 0.13 & (0.00, 0.62, 1.77) & 0.65 & (27.55, 33.24, 36.31) & 13.66 & (0.00, \textbf{0.00}, 0.00) & 0.12 \\
kra30b & (0.00, \textbf{0.00}, 0.00) & 0.29 & (0.00, \textbf{0.00}, 0.00) & 0.06 & (0.00, 0.21, 0.72) & 0.64 & (27.02, 32.08, 35.05) & 13.69 & (0.00, \textbf{0.00}, 0.00) & 0.67 \\
kra32 & (0.00, \textbf{0.00}, 0.00) & 0.08 & (0.00, \textbf{0.00}, 0.00) & 0.01 & (0.00, 1.12, 1.76) & 0.57 & (31.87, 34.77, 36.83) & 13.74 & (0.00, \textbf{0.00}, 0.00) & 0.14 \\
lipa20a & (0.00, \textbf{0.00}, 0.00) & 0.01 & (0.00, \textbf{0.00}, 0.00) & 0.01 & (0.00, 0.70, 2.20) & 0.71 & (4.51, 4.90, 5.21) & 13.39 & (0.00, \textbf{0.00}, 0.00) & 0.08 \\
lipa20b & (0.00, \textbf{0.00}, 0.00) & 0.00 & (0.00, \textbf{0.00}, 0.00) & 0.00 & (0.00, \textbf{0.00}, 0.00) & 0.36 & (0.00, \textbf{0.00}, 0.00) & 13.41 & (0.00, \textbf{0.00}, 0.00) & 0.05 \\
lipa30a & (0.00, \textbf{0.00}, 0.00) & 0.03 & (0.00, \textbf{0.00}, 0.00) & 0.05 & (0.18, 1.69, 2.00) & 1.27 & (3.56, 3.72, 3.86) & 13.69 & (0.00, \textbf{0.00}, 0.00) & 0.16 \\
lipa30b & (0.00, \textbf{0.00}, 0.00) & 0.00 & (0.00, \textbf{0.00}, 0.00) & 0.00 & (0.00, \textbf{0.00}, 0.00) & 0.69 & (0.00, \textbf{0.00}, 0.00) & 13.68 & (0.00, \textbf{0.00}, 0.00) & 0.07 \\
lipa40a & (0.00, \textbf{0.00}, 0.00) & 0.09 & (0.00, \textbf{0.00}, 0.00) & 0.04 & (1.53, 1.59, 1.66) & 1.99 & (2.90, 3.02, 3.11) & 13.98 & (0.00, \textbf{0.00}, 0.00) & 0.39 \\
lipa40b & (0.00, \textbf{0.00}, 0.00) & 0.01 & (0.00, \textbf{0.00}, 0.00) & 0.01 & (0.00, \textbf{0.00}, 0.00) & 1.00 & (0.00, \textbf{0.00}, 0.00) & 13.99 & (0.00, \textbf{0.00}, 0.00) & 0.09 \\
lipa50a & (0.00, \textbf{0.00}, 0.00) & 0.32 & (0.00, \textbf{0.00}, 0.00) & 0.09 & (1.36, 1.41, 1.44) & 4.32 & (2.40, 2.62, 2.72) & 14.25 & (0.00, \textbf{0.00}, 0.00) & 0.77 \\
lipa50b & (0.00, \textbf{0.00}, 0.00) & 0.04 & (0.00, \textbf{0.00}, 0.00) & 0.01 & (0.00, \textbf{0.00}, 0.00) & 2.33 & (0.00, \textbf{0.00}, 0.00) & 14.28 & (0.00, \textbf{0.00}, 0.00) & 0.09 \\
lipa60a & (0.00, \textbf{0.00}, 0.00) & 0.63 & (0.00, \textbf{0.00}, 0.00) & 0.94 & (1.11, 1.18, 1.23) & 6.26 & (2.15, 2.29, 2.35) & 18.78 & (0.00, \textbf{0.00}, 0.00) & 1.62 \\
lipa60b & (0.00, \textbf{0.00}, 0.00) & 0.16 & (0.00, \textbf{0.00}, 0.00) & 0.06 & (0.00, \textbf{0.00}, 0.00) & 3.34 & (0.00, \textbf{0.00}, 0.00) & 18.75 & (0.00, \textbf{0.00}, 0.00) & 0.15 \\
lipa70a & (0.00, \textbf{0.00}, 0.00) & 2.67 & (0.00, 0.06, 0.56) & 4.02 & (1.06, 1.07, 1.11) & 8.87 & (2.04, 2.08, 2.12) & 19.19 & (0.00, \textbf{0.00}, 0.00) & 2.01 \\
lipa70b & (0.00, \textbf{0.00}, 0.00) & 0.10 & (0.00, \textbf{0.00}, 0.00) & 0.35 & (0.00, 3.74, 18.72) & 4.73 & (0.00, \textbf{0.00}, 0.00) & 19.17 & (0.00, \textbf{0.00}, 0.00) & 0.17 \\
lipa80a & (0.00, \textbf{0.10}, 0.49) & 15.65 & (0.00, 0.16, 0.54) & 9.23 & (0.95, 0.98, 1.01) & 12.77 & (1.79, 1.85, 1.90) & 19.57 & (0.66, 0.69, 0.71) & 9.80 \\
lipa80b & (0.00, \textbf{0.00}, 0.00) & 0.62 & (0.00, \textbf{0.00}, 0.00) & 0.37 & (0.00, 1.97, 19.69) & 5.93 & (0.00, \textbf{0.00}, 0.00) & 19.57 & (0.00, \textbf{0.00}, 0.00) & 0.24 \\
lipa90a & (0.00, \textbf{0.38}, 0.49) & 31.28 & (0.00, 0.38, 0.50) & 13.95 & (0.86, 0.89, 0.92) & 16.70 & (1.64, 1.68, 1.72) & 19.97 & (0.63, 0.66, 0.68) & 9.15 \\
lipa90b & (0.00, \textbf{0.00}, 0.00) & 0.91 & (0.00, \textbf{0.00}, 0.00) & 0.45 & (0.00, 1.99, 19.90) & 7.68 & (0.00, \textbf{0.00}, 0.00) & 19.95 & (0.00, \textbf{0.00}, 0.00) & 0.25 \\
nug12 & (0.00, \textbf{0.00}, 0.00) & 0.00 & (0.00, \textbf{0.00}, 0.00) & 0.00 & (0.00, \textbf{0.00}, 0.00) & 0.24 & (10.73, 15.47, 18.34) & 13.01 & (0.00, \textbf{0.00}, 0.00) & 0.04 \\
nug14 & (0.00, \textbf{0.00}, 0.00) & 0.00 & (0.00, \textbf{0.00}, 0.00) & 0.01 & (0.00, 0.02, 0.20) & 0.30 & (11.44, 14.00, 15.58) & 13.07 & (0.00, \textbf{0.00}, 0.00) & 0.05 \\
nug15 & (0.00, \textbf{0.00}, 0.00) & 0.00 & (0.00, \textbf{0.00}, 0.00) & 0.00 & (0.00, \textbf{0.00}, 0.00) & 0.31 & (12.52, 16.02, 19.48) & 13.07 & (0.00, \textbf{0.00}, 0.00) & 0.04 \\
nug16a & (0.00, \textbf{0.00}, 0.00) & 0.00 & (0.00, \textbf{0.00}, 0.00) & 0.00 & (0.00, 0.09, 0.75) & 0.30 & (9.81, 15.24, 18.63) & 13.09 & (0.00, \textbf{0.00}, 0.00) & 0.07 \\
nug16b & (0.00, \textbf{0.00}, 0.00) & 0.00 & (0.00, \textbf{0.00}, 0.00) & 0.00 & (0.00, \textbf{0.00}, 0.00) & 0.30 & (13.71, 18.50, 21.94) & 13.11 & (0.00, \textbf{0.00}, 0.00) & 0.04 \\
nug17 & (0.00, \textbf{0.00}, 0.00) & 0.01 & (0.00, \textbf{0.00}, 0.00) & 0.04 & (0.00, 0.02, 0.12) & 0.33 & (13.63, 16.57, 19.52) & 13.30 & (0.00, \textbf{0.00}, 0.00) & 0.09 \\
nug18 & (0.00, \textbf{0.00}, 0.00) & 0.01 & (0.00, \textbf{0.00}, 0.00) & 0.02 & (0.00, 0.04, 0.41) & 0.33 & (13.68, 16.46, 19.79) & 13.37 & (0.00, \textbf{0.00}, 0.00) & 0.10 \\
nug20 & (0.00, \textbf{0.00}, 0.00) & 0.00 & (0.00, \textbf{0.00}, 0.00) & 0.00 & (0.00, 0.03, 0.31) & 0.37 & (14.24, 16.86, 19.22) & 13.42 & (0.00, \textbf{0.00}, 0.00) & 0.08 \\
nug21 & (0.00, \textbf{0.00}, 0.00) & 0.01 & (0.00, \textbf{0.00}, 0.00) & 0.01 & (0.00, 0.19, 0.41) & 0.39 & (19.85, 22.46, 25.43) & 13.42 & (0.00, \textbf{0.00}, 0.00) & 0.09 \\
nug22 & (0.00, \textbf{0.00}, 0.00) & 0.01 & (0.00, \textbf{0.00}, 0.00) & 0.02 & (0.00, \textbf{0.00}, 0.00) & 0.38 & (18.35, 21.69, 24.25) & 13.49 & (0.00, \textbf{0.00}, 0.00) & 0.08 \\
nug24 & (0.00, \textbf{0.00}, 0.00) & 0.02 & (0.00, \textbf{0.00}, 0.00) & 0.00 & (0.00, 0.01, 0.11) & 0.43 & (18.92, 22.00, 24.43) & 13.50 & (0.00, \textbf{0.00}, 0.00) & 0.11 \\
nug25 & (0.00, \textbf{0.00}, 0.00) & 0.01 & (0.00, \textbf{0.00}, 0.00) & 0.00 & (0.11, 0.11, 0.16) & 0.47 & (17.84, 19.49, 23.24) & 13.52 & (0.00, \textbf{0.00}, 0.00) & 0.21 \\
nug27 & (0.00, \textbf{0.00}, 0.00) & 0.03 & (0.00, \textbf{0.00}, 0.00) & 0.02 & (0.04, 0.04, 0.04) & 0.51 & (20.10, 22.20, 24.00) & 13.58 & (0.00, \textbf{0.00}, 0.00) & 0.11 \\
nug28 & (0.00, \textbf{0.00}, 0.00) & 0.13 & (0.00, \textbf{0.00}, 0.00) & 0.03 & (0.00, 0.15, 0.39) & 0.52 & (19.09, 22.02, 23.50) & 13.60 & (0.00, \textbf{0.00}, 0.00) & 0.16 \\
nug30 & (0.00, \textbf{0.00}, 0.00) & 0.12 & (0.00, \textbf{0.00}, 0.00) & 0.16 & (0.00, 0.04, 0.07) & 0.60 & (20.08, 21.86, 23.74) & 13.69 & (0.00, 0.01, 0.07) & 1.21 \\
rou12 & (0.00, \textbf{0.00}, 0.00) & 0.00 & (0.00, \textbf{0.00}, 0.00) & 0.00 & (0.00, 0.38, 1.45) & 0.29 & (9.88, 12.06, 14.99) & 13.00 & (0.00, \textbf{0.00}, 0.00) & 0.05 \\
rou15 & (0.00, \textbf{0.00}, 0.00) & 0.00 & (0.00, \textbf{0.00}, 0.00) & 0.00 & (0.00, 1.20, 3.19) & 0.30 & (12.25, 16.09, 18.39) & 13.09 & (0.00, \textbf{0.00}, 0.00) & 0.07 \\
rou20 & (0.00, \textbf{0.00}, 0.00) & 0.12 & (0.00, 0.01, 0.01) & 0.16 & (0.40, 0.73, 1.16) & 0.39 & (12.21, 14.94, 16.64) & 13.40 & (0.00, \textbf{0.00}, 0.00) & 0.12 \\
scr12 & (0.00, \textbf{0.00}, 0.00) & 0.00 & (0.00, \textbf{0.00}, 0.00) & 0.00 & (0.00, 1.35, 3.87) & 0.23 & (15.47, 19.07, 26.54) & 12.98 & (0.00, \textbf{0.00}, 0.00) & 0.04 \\
scr15 & (0.00, \textbf{0.00}, 0.00) & 0.00 & (0.00, \textbf{0.00}, 0.00) & 0.00 & (0.00, 0.94, 4.12) & 0.26 & (30.38, 34.71, 42.59) & 13.06 & (0.00, \textbf{0.00}, 0.00) & 0.06 \\
scr20 & (0.00, \textbf{0.00}, 0.00) & 0.02 & (0.00, \textbf{0.00}, 0.00) & 0.01 & (0.03, 1.44, 2.85) & 0.34 & (38.23, 50.37, 56.50) & 13.37 & (0.00, \textbf{0.00}, 0.00) & 0.13 \\
sko100a & (0.03, 0.09, 0.14) & 47.00 & (0.02, \textbf{0.06}, 0.12) & 18.87 & (0.22, 0.32, 0.42) & 12.58 & (13.96, 14.71, 15.64) & 19.51 & (0.04, 0.07, 0.10) & 9.46 \\
sko100b & (0.01, 0.05, 0.08) & 47.02 & (0.01, \textbf{0.04}, 0.17) & 18.23 & (0.12, 0.33, 0.53) & 12.68 & (13.59, 14.54, 15.12) & 19.44 & (0.00, 0.06, 0.13) & 9.18 \\
sko100c & (0.01, 0.03, 0.07) & 47.00 & (0.00, 0.03, 0.11) & 18.94 & (0.09, 0.24, 0.52) & 13.04 & (14.43, 15.28, 15.58) & 19.42 & (0.00, \textbf{0.02}, 0.04) & 9.44 \\
sko100d & (0.05, 0.08, 0.12) & 47.07 & (0.01, 0.07, 0.11) & 19.76 & (0.20, 0.33, 0.45) & 12.94 & (13.99, 14.69, 15.24) & 19.44 & (0.00, \textbf{0.06}, 0.12) & 9.53 \\
sko100e & (0.01, 0.05, 0.08) & 47.16 & (0.01, \textbf{0.03}, 0.05) & 18.58 & (0.12, 0.40, 0.56) & 12.85 & (14.34, 15.40, 15.89) & 19.37 & (0.02, 0.03, 0.07) & 9.55 \\
sko100f & (0.01, \textbf{0.06}, 0.13) & 47.10 & (0.02, 0.10, 0.25) & 18.58 & (0.39, 0.47, 0.56) & 12.80 & (13.57, 14.33, 14.76) & 19.37 & (0.05, 0.08, 0.14) & 9.49 \\
sko42 & (0.00, 0.00, 0.03) & 1.38 & (0.00, \textbf{0.00}, 0.00) & 0.19 & (0.04, 0.18, 0.35) & 1.55 & (18.52, 19.57, 20.72) & 13.99 & (0.00, \textbf{0.00}, 0.00) & 0.84 \\
sko49 & (0.00, 0.04, 0.07) & 4.81 & (0.00, 0.03, 0.07) & 3.65 & (0.07, 0.27, 0.56) & 2.10 & (17.15, 18.45, 19.86) & 14.19 & (0.00, \textbf{0.00}, 0.00) & 1.19 \\
sko56 & (0.00, 0.03, 0.19) & 6.32 & (0.00, \textbf{0.00}, 0.01) & 4.36 & (0.09, 0.29, 0.57) & 2.72 & (17.12, 18.27, 19.33) & 17.71 & (0.00, \textbf{0.00}, 0.01) & 4.21 \\
sko64 & (0.00, 0.02, 0.06) & 10.68 & (0.00, 0.00, 0.01) & 4.07 & (0.02, 0.23, 0.39) & 3.72 & (16.03, 16.90, 18.06) & 17.96 & (0.00, \textbf{0.00}, 0.01) & 3.79 \\
sko72 & (0.02, 0.07, 0.13) & 16.84 & (0.00, 0.04, 0.14) & 7.58 & (0.17, 0.28, 0.55) & 5.08 & (16.04, 16.46, 17.28) & 18.30 & (0.00, \textbf{0.01}, 0.03) & 6.95 \\
sko81 & (0.01, 0.06, 0.11) & 24.19 & (0.01, 0.04, 0.07) & 10.19 & (0.20, 0.31, 0.42) & 7.57 & (15.28, 15.72, 16.17) & 18.63 & (0.00, \textbf{0.04}, 0.07) & 8.61 \\
sko90 & (0.04, 0.08, 0.16) & 33.29 & (0.01, 0.07, 0.21) & 12.42 & (0.21, 0.31, 0.42) & 9.59 & (14.74, 15.19, 15.54) & 18.98 & (0.00, \textbf{0.06}, 0.16) & 8.87 \\
ste36a & (0.00, 0.02, 0.10) & 0.90 & (0.00, \textbf{0.00}, 0.00) & 0.56 & (0.82, 1.88, 3.99) & 0.75 & (56.02, 63.33, 64.52) & 13.83 & (0.00, \textbf{0.00}, 0.00) & 0.95 \\
ste36b & (0.00, \textbf{0.00}, 0.00) & 0.18 & (0.00, \textbf{0.00}, 0.00) & 0.11 & (0.56, 1.51, 3.13) & 0.48 & (172.62, 190.62, 206.91) & 13.83 & (0.00, \textbf{0.00}, 0.00) & 0.24 \\
ste36c & (0.00, \textbf{0.00}, 0.00) & 0.52 & (0.00, \textbf{0.00}, 0.00) & 0.47 & (0.71, 2.05, 3.15) & 0.86 & (51.21, 62.17, 68.23) & 13.84 & (0.00, \textbf{0.00}, 0.00) & 0.45 \\
tai100a & (0.91, \textbf{0.96}, 1.00) & 47.14 & (0.87, 1.08, 1.23) & 24.12 & (1.22, 1.44, 1.56) & 8.22 & (12.27, 12.66, 12.87) & 19.45 & (0.94, 1.12, 1.45) & 7.18 \\
tai100b & (0.00, 0.22, 0.42) & 47.09 & (0.02, 0.22, 0.53) & 16.15 & (0.25, 0.64, 0.84) & 12.24 & (35.90, 38.97, 41.36) & 19.41 & (0.00, \textbf{0.01}, 0.04) & 8.00 \\
tai10a & (0.00, \textbf{0.00}, 0.00) & 0.00 & (0.00, \textbf{0.00}, 0.00) & 0.00 & (0.00, 0.24, 0.59) & 0.24 & (2.30, 10.69, 16.82) & 12.90 & (0.00, \textbf{0.00}, 0.00) & 0.04 \\
tai12a & (0.00, \textbf{0.00}, 0.00) & 0.00 & (0.00, \textbf{0.00}, 0.00) & 0.00 & (0.00, \textbf{0.00}, 0.00) & 0.27 & (10.83, 14.80, 18.07) & 12.99 & (0.00, \textbf{0.00}, 0.00) & 0.04 \\
tai12b & (0.00, \textbf{0.00}, 0.00) & 0.00 & (0.00, \textbf{0.00}, 0.00) & 0.00 & (0.00, 0.32, 1.52) & 0.20 & (11.78, 21.86, 34.97) & 13.01 & (0.00, \textbf{0.00}, 0.00) & 0.04 \\
tai150b & (0.28, 0.45, 0.61) & 113.66 & (0.33, 0.57, 0.86) & 39.96 & (1.12, 1.29, 1.62) & 32.63 & (25.47, 26.25, 26.69) & 21.34 & (0.02, \textbf{0.22}, 0.48) & 12.76 \\
tai15a & (0.00, \textbf{0.00}, 0.00) & 0.00 & (0.00, \textbf{0.00}, 0.00) & 0.00 & (0.20, 0.60, 1.04) & 0.33 & (9.74, 11.41, 13.86) & 13.05 & (0.00, \textbf{0.00}, 0.00) & 0.05 \\
tai15b & (0.00, \textbf{0.00}, 0.00) & 0.00 & (0.00, \textbf{0.00}, 0.00) & 0.01 & (0.04, 0.20, 0.32) & 0.31 & (1.71, 2.39, 2.81) & 13.07 & (0.00, \textbf{0.00}, 0.00) & 0.04 \\
tai17a & (0.00, \textbf{0.00}, 0.00) & 0.02 & (0.00, \textbf{0.00}, 0.00) & 0.00 & (0.54, 1.28, 2.07) & 0.37 & (12.79, 15.02, 16.55) & 13.30 & (0.00, \textbf{0.00}, 0.00) & 0.23 \\
tai20a & (0.00, \textbf{0.00}, 0.00) & 0.10 & (0.00, 0.28, 0.47) & 0.28 & (0.00, 0.91, 1.82) & 0.37 & (14.75, 16.50, 18.31) & 13.38 & (0.00, 0.21, 0.57) & 5.25 \\
tai20b & (0.00, \textbf{0.00}, 0.00) & 0.01 & (0.00, \textbf{0.00}, 0.00) & 0.01 & (0.00, 0.31, 0.76) & 0.32 & (16.24, 34.75, 51.28) & 13.38 & (0.00, \textbf{0.00}, 0.00) & 0.08 \\
tai256c & (0.19, 0.21, 0.24) & 387.57 & (0.16, \textbf{0.19}, 0.22) & 299.88 & (0.67, 0.80, 1.00) & 9.69 & (62.32, 90.52, 115.80) & 25.31 & (0.17, 0.21, 0.24) & 34.86 \\
tai25a & (0.00, 0.11, 0.41) & 0.44 & (0.00, \textbf{0.00}, 0.00) & 0.11 & (0.88, 1.63, 2.32) & 0.52 & (15.02, 15.97, 16.91) & 13.52 & (0.00, \textbf{0.00}, 0.00) & 0.45 \\
tai25b & (0.00, \textbf{0.00}, 0.00) & 0.05 & (0.00, \textbf{0.00}, 0.00) & 0.04 & (0.09, 0.65, 1.39) & 0.37 & (36.75, 56.55, 71.64) & 13.53 & (0.00, \textbf{0.00}, 0.00) & 0.09 \\
tai30a & (0.00, 0.00, 0.02) & 0.80 & (0.00, 0.04, 0.18) & 0.46 & (0.95, 1.47, 1.69) & 0.66 & (14.18, 14.76, 15.27) & 13.68 & (0.00, \textbf{0.00}, 0.00) & 2.37 \\
tai30b & (0.00, 0.00, 0.00) & 0.65 & (0.00, \textbf{0.00}, 0.00) & 0.12 & (0.00, 0.56, 2.22) & 0.54 & (44.16, 52.71, 62.29) & 13.67 & (0.00, \textbf{0.00}, 0.00) & 1.10 \\
tai35a & (0.00, 0.29, 0.61) & 1.69 & (0.00, \textbf{0.16}, 0.58) & 0.81 & (1.36, 1.72, 2.20) & 0.80 & (14.15, 15.26, 15.89) & 13.80 & (0.00, 0.26, 0.49) & 9.61 \\
tai35b & (0.00, 0.01, 0.14) & 0.68 & (0.00, \textbf{0.00}, 0.00) & 0.35 & (0.14, 0.66, 1.41) & 0.52 & (27.12, 43.65, 49.07) & 13.82 & (0.00, \textbf{0.00}, 0.00) & 0.65 \\
tai40a & (0.07, \textbf{0.41}, 0.78) & 2.83 & (0.22, 0.54, 0.77) & 1.83 & (0.98, 1.55, 2.00) & 0.86 & (13.72, 15.58, 16.24) & 13.96 & (0.35, 0.57, 0.74) & 6.78 \\
tai40b & (0.00, \textbf{0.00}, 0.00) & 0.89 & (0.00, \textbf{0.00}, 0.00) & 0.32 & (0.04, 0.46, 2.71) & 0.80 & (42.25, 46.83, 55.28) & 13.96 & (0.00, \textbf{0.00}, 0.00) & 0.23 \\
tai50a & (0.64, 0.89, 1.07) & 5.56 & (0.64, \textbf{0.86}, 1.13) & 4.23 & (1.22, 1.63, 1.91) & 1.82 & (14.32, 15.64, 16.71) & 14.22 & (0.64, 0.88, 1.11) & 8.51 \\
tai50b & (0.00, 0.10, 0.37) & 4.63 & (0.00, 0.01, 0.05) & 2.32 & (0.20, 0.80, 1.74) & 1.92 & (43.10, 45.39, 49.12) & 14.22 & (0.00, \textbf{0.00}, 0.00) & 4.71 \\
tai60a & (0.85, 1.04, 1.32) & 9.65 & (0.72, 1.02, 1.29) & 6.84 & (1.41, 1.75, 2.11) & 2.58 & (14.67, 15.11, 15.42) & 17.97 & (0.50, \textbf{0.90}, 1.08) & 5.87 \\
tai60b & (0.00, 0.06, 0.47) & 8.68 & (0.00, 0.00, 0.02) & 3.86 & (0.11, 0.43, 0.68) & 2.73 & (41.85, 47.35, 51.30) & 17.94 & (0.00, \textbf{0.00}, 0.00) & 1.66 \\
tai64c & (0.00, \textbf{0.00}, 0.00) & 0.70 & (0.00, \textbf{0.00}, 0.00) & 0.07 & (0.24, 0.58, 1.15) & 0.21 & (217.55, 217.55, 217.55) & 18.08 & (0.00, \textbf{0.00}, 0.00) & 0.06 \\
tai80a & (0.83, 1.11, 1.23) & 23.31 & (0.95, 1.13, 1.22) & 13.17 & (1.44, 1.64, 1.86) & 4.40 & (13.27, 13.60, 13.88) & 18.74 & (0.83, \textbf{1.08}, 1.23) & 7.91 \\
tai80b & (0.00, 0.21, 0.67) & 23.25 & (0.05, 0.74, 1.37) & 8.56 & (0.42, 1.26, 1.92) & 6.02 & (37.65, 39.91, 42.42) & 18.73 & (0.00, \textbf{0.05}, 0.29) & 9.13 \\
tho150 & (0.08, \textbf{0.11}, 0.15) & 113.64 & (0.07, 0.13, 0.21) & 40.92 & (0.33, 0.51, 0.76) & 36.00 & (17.67, 18.09, 18.69) & 21.34 & (0.10, 0.13, 0.18) & 18.80 \\
tho30 & (0.00, \textbf{0.00}, 0.00) & 0.09 & (0.00, \textbf{0.00}, 0.00) & 0.16 & (0.00, 0.32, 0.57) & 0.54 & (22.27, 25.44, 29.29) & 13.67 & (0.00, \textbf{0.00}, 0.00) & 0.09 \\
tho40 & (0.00, 0.01, 0.05) & 2.49 & (0.00, 0.01, 0.05) & 1.13 & (0.14, 0.41, 0.89) & 0.84 & (22.31, 28.52, 30.38) & 13.93 & (0.00, \textbf{0.00}, 0.00) & 0.37 \\
wil100 & (0.01, \textbf{0.03}, 0.07) & 47.05 & (0.01, 0.03, 0.12) & 18.86 & (0.07, 0.16, 0.21) & 16.54 & (7.80, 8.17, 8.46) & 19.51 & (0.02, 0.03, 0.07) & 9.20 \\
wil50 & (0.00, 0.02, 0.04) & 5.07 & (0.00, \textbf{0.00}, 0.02) & 1.12 & (0.02, 0.07, 0.16) & 2.79 & (9.11, 9.86, 10.39) & 14.25 & (0.00, 0.01, 0.04) & 4.45 \\

\end{longtable}
}

{
\fontsize{8.8pt}{10.5pt}\selectfont
\setlength{\tabcolsep}{3pt}
\begin{longtable}{l cc cc cc cc}
\caption{Detailed Taixxeyy results. Gap is reported as (min, mean, max) over 10 runs; time is the average runtime (s).} \label{tab:taixxeyy_details} \\
\toprule
\textbf{Problem} & \multicolumn{2}{c}{\textbf{Ro-TS}} & \multicolumn{2}{c}{\textbf{BMA}} & \multicolumn{2}{c}{\textbf{IPFP}} & \multicolumn{2}{c}{\textbf{PLMA}} \\
\cmidrule(lr){2-3} \cmidrule(lr){4-5} \cmidrule(lr){6-7} \cmidrule(lr){8-9}
\textbf{Name} & Gap & Time & Gap & Time & Gap & Time & Gap & Time \\
\midrule
\endfirsthead

\multicolumn{9}{c}
{{\tablename\ \thetable{} -- continued from previous page}} \\
\toprule
\textbf{Problem} & \multicolumn{2}{c}{\textbf{Ro-TS}} & \multicolumn{2}{c}{\textbf{BMA}} & \multicolumn{2}{c}{\textbf{IPFP}} & \multicolumn{2}{c}{\textbf{PLMA}} \\
\cmidrule(lr){2-3} \cmidrule(lr){4-5} \cmidrule(lr){6-7} \cmidrule(lr){8-9}
\textbf{Name} & Gap & Time & Gap & Time & Gap & Time & Gap & Time \\
\midrule
\endhead

\bottomrule
\endfoot

tai27e01 & (0.00, 7.58, 19.16) & 0.62 & (0.00, \textbf{0.00}, 0.00) & 0.22 & (8.44, 15.86, 22.67) & 0.36 & (0.00, \textbf{0.00}, 0.00) & 0.11 \\
tai27e02 & (0.00, 0.25, 0.84) & 0.56 & (0.00, \textbf{0.00}, 0.00) & 0.15 & (6.25, 15.96, 26.53) & 0.33 & (0.00, \textbf{0.00}, 0.00) & 0.14 \\
tai27e03 & (0.00, 2.68, 14.18) & 0.66 & (0.00, \textbf{0.00}, 0.00) & 0.26 & (17.62, 24.44, 31.92) & 0.38 & (0.00, \textbf{0.00}, 0.00) & 0.10 \\
tai27e04 & (0.00, 89.99, 405.32) & 0.71 & (0.00, \textbf{0.00}, 0.00) & 0.33 & (12.40, 20.96, 33.88) & 0.41 & (0.00, \textbf{0.00}, 0.00) & 0.09 \\
tai27e05 & (2.15, 50.76, 433.70) & 0.82 & (0.00, \textbf{0.00}, 0.00) & 0.04 & (3.77, 17.15, 34.94) & 0.37 & (0.00, \textbf{0.00}, 0.00) & 0.07 \\
tai27e06 & (0.00, 40.48, 371.00) & 0.54 & (0.00, \textbf{0.00}, 0.00) & 0.12 & (5.69, 23.73, 35.18) & 0.39 & (0.00, \textbf{0.00}, 0.00) & 0.08 \\
tai27e07 & (0.00, 6.60, 14.88) & 0.60 & (0.00, \textbf{0.00}, 0.00) & 0.26 & (10.33, 22.32, 44.87) & 0.38 & (0.00, \textbf{0.00}, 0.00) & 0.07 \\
tai27e08 & (0.00, 147.98, 492.43) & 0.41 & (0.00, 1.68, 8.40) & 0.34 & (10.12, 31.15, 59.34) & 0.42 & (0.00, \textbf{0.00}, 0.00) & 0.06 \\
tai27e09 & (0.00, 69.25, 337.28) & 0.49 & (0.00, \textbf{0.00}, 0.00) & 0.10 & (3.31, 19.33, 33.49) & 0.39 & (0.00, \textbf{0.00}, 0.00) & 0.08 \\
tai27e10 & (0.00, 44.54, 438.41) & 0.36 & (0.00, \textbf{0.00}, 0.00) & 0.04 & (12.22, 23.15, 37.07) & 0.39 & (0.00, \textbf{0.00}, 0.00) & 0.08 \\
tai27e11 & (0.00, 41.68, 382.66) & 0.63 & (0.00, \textbf{0.00}, 0.00) & 0.15 & (1.38, 18.87, 27.80) & 0.40 & (0.00, \textbf{0.00}, 0.00) & 0.06 \\
tai27e12 & (0.00, 2.69, 11.40) & 0.67 & (0.00, 0.26, 1.30) & 0.34 & (2.02, 16.74, 31.47) & 0.35 & (0.00, \textbf{0.00}, 0.00) & 0.09 \\
tai27e13 & (0.00, 158.72, 390.63) & 0.59 & (0.00, \textbf{0.00}, 0.00) & 0.21 & (0.40, 13.43, 27.85) & 0.39 & (0.00, \textbf{0.00}, 0.00) & 0.05 \\
tai27e14 & (0.00, 3.42, 10.48) & 0.45 & (0.00, \textbf{0.00}, 0.00) & 0.01 & (4.65, 15.76, 32.40) & 0.39 & (0.00, \textbf{0.00}, 0.00) & 0.08 \\
tai27e15 & (0.00, 3.54, 16.36) & 0.67 & (0.00, \textbf{0.00}, 0.00) & 0.22 & (15.98, 21.71, 39.19) & 0.37 & (0.00, \textbf{0.00}, 0.00) & 0.09 \\
tai27e16 & (0.00, 34.65, 346.48) & 0.43 & (0.00, \textbf{0.00}, 0.00) & 0.20 & (0.00, 19.74, 31.31) & 0.38 & (0.00, \textbf{0.00}, 0.00) & 0.07 \\
tai27e17 & (0.00, 29.34, 293.39) & 0.26 & (0.00, \textbf{0.00}, 0.00) & 0.08 & (0.26, 16.54, 30.05) & 0.38 & (0.00, \textbf{0.00}, 0.00) & 0.06 \\
tai27e18 & (0.00, 85.84, 419.14) & 0.61 & (0.00, \textbf{0.00}, 0.00) & 0.10 & (0.00, 16.15, 25.89) & 0.37 & (0.00, \textbf{0.00}, 0.00) & 0.08 \\
tai27e19 & (0.00, 6.21, 15.27) & 0.76 & (0.00, 0.37, 3.66) & 0.34 & (4.61, 14.08, 31.98) & 0.37 & (0.00, \textbf{0.00}, 0.00) & 0.07 \\
tai27e20 & (0.00, 3.75, 8.49) & 0.51 & (0.00, \textbf{0.00}, 0.00) & 0.13 & (3.34, 22.24, 44.28) & 0.39 & (0.00, \textbf{0.00}, 0.00) & 0.06 \\
tai45e01 & (7.30, 58.87, 382.75) & 3.89 & (1.65, 9.95, 15.69) & 2.32 & (11.48, 24.05, 47.07) & 0.97 & (0.00, \textbf{0.00}, 0.00) & 0.15 \\
tai45e02 & (1.33, 60.63, 401.19) & 3.89 & (0.00, 5.93, 17.13) & 2.20 & (16.92, 24.29, 32.68) & 1.06 & (0.00, \textbf{0.00}, 0.00) & 0.21 \\
tai45e03 & (0.11, 202.98, 486.07) & 3.89 & (0.40, 9.18, 18.37) & 2.26 & (1.02, 19.10, 34.07) & 1.06 & (0.00, \textbf{0.00}, 0.00) & 0.19 \\
tai45e04 & (0.84, 136.56, 446.88) & 3.90 & (0.00, 6.89, 15.11) & 2.12 & (16.36, 30.26, 49.66) & 1.04 & (0.00, \textbf{0.00}, 0.00) & 0.31 \\
tai45e05 & (1.62, 181.32, 445.89) & 3.89 & (0.99, 6.92, 14.87) & 2.28 & (11.25, 28.66, 44.38) & 1.06 & (0.00, \textbf{0.00}, 0.00) & 0.21 \\
tai45e06 & (0.51, 56.11, 467.88) & 3.90 & (2.84, 15.35, 25.47) & 2.19 & (6.05, 16.23, 29.89) & 1.03 & (0.00, \textbf{0.00}, 0.00) & 0.14 \\
tai45e07 & (0.43, 72.06, 347.09) & 3.90 & (0.00, 7.24, 24.18) & 2.10 & (2.07, 21.99, 37.52) & 1.01 & (0.00, \textbf{0.00}, 0.00) & 0.21 \\
tai45e08 & (1.43, 156.65, 371.93) & 3.90 & (0.00, 10.09, 17.73) & 2.11 & (2.93, 22.55, 37.90) & 1.01 & (0.00, \textbf{0.00}, 0.00) & 0.15 \\
tai45e09 & (1.87, 102.48, 486.58) & 3.89 & (2.65, 12.59, 22.32) & 2.23 & (7.04, 22.74, 36.19) & 0.99 & (0.00, \textbf{0.00}, 0.00) & 0.15 \\
tai45e10 & (0.02, 154.62, 380.81) & 3.89 & (0.00, 5.08, 15.66) & 2.35 & (1.83, 17.21, 32.46) & 1.00 & (0.00, \textbf{0.00}, 0.00) & 0.20 \\
tai45e11 & (0.00, 51.49, 425.75) & 3.74 & (1.66, 10.90, 20.61) & 2.26 & (12.50, 20.74, 37.36) & 1.12 & (0.00, \textbf{0.00}, 0.00) & 0.14 \\
tai45e12 & (0.00, 51.22, 414.57) & 3.68 & (0.00, 7.14, 16.22) & 2.13 & (9.19, 19.29, 38.46) & 0.99 & (0.00, \textbf{0.00}, 0.00) & 0.18 \\
tai45e13 & (0.00, 85.56, 381.99) & 3.60 & (0.88, 7.06, 17.81) & 2.26 & (10.75, 24.52, 43.82) & 1.03 & (0.00, \textbf{0.00}, 0.00) & 0.18 \\
tai45e14 & (0.00, 19.08, 87.45) & 3.80 & (0.00, 7.14, 17.60) & 2.39 & (16.25, 25.77, 36.24) & 1.07 & (0.00, \textbf{0.00}, 0.00) & 0.20 \\
tai45e15 & (0.00, 126.79, 401.33) & 3.86 & (0.62, 9.06, 14.96) & 2.35 & (1.43, 21.44, 31.57) & 1.09 & (0.00, \textbf{0.00}, 0.00) & 0.14 \\
tai45e16 & (0.37, 42.02, 374.85) & 3.89 & (0.00, 4.46, 15.31) & 2.16 & (0.95, 19.01, 42.52) & 1.00 & (0.00, \textbf{0.00}, 0.00) & 0.13 \\
tai45e17 & (2.18, 98.03, 300.45) & 3.89 & (0.00, 5.70, 10.83) & 2.09 & (0.23, 12.26, 25.89) & 1.02 & (0.00, \textbf{0.00}, 0.00) & 0.19 \\
tai45e18 & (0.00, 87.40, 413.12) & 3.58 & (0.00, 6.08, 16.62) & 1.85 & (7.96, 21.26, 33.48) & 1.02 & (0.00, \textbf{0.00}, 0.00) & 0.13 \\
tai45e19 & (0.00, 172.91, 429.12) & 3.64 & (0.00, 8.60, 14.87) & 2.18 & (18.38, 24.73, 33.42) & 1.02 & (0.00, \textbf{0.00}, 0.00) & 0.19 \\
tai45e20 & (2.09, 121.06, 566.27) & 3.89 & (3.01, 12.34, 21.29) & 2.33 & (10.51, 24.10, 35.02) & 1.00 & (0.00, \textbf{0.00}, 0.00) & 0.16 \\
tai75e01 & (9.73, 95.31, 283.74) & 18.47 & (8.23, 12.82, 16.08) & 11.52 & (20.49, 29.09, 36.29) & 2.60 & (0.00, \textbf{0.00}, 0.00) & 1.38 \\
tai75e02 & (11.80, 99.41, 294.86) & 18.50 & (9.06, 15.79, 22.51) & 11.30 & (24.92, 31.31, 42.26) & 2.53 & (0.00, \textbf{0.00}, 0.00) & 2.29 \\
tai75e03 & (0.81, 67.62, 272.54) & 18.48 & (2.44, 15.23, 21.96) & 10.91 & (20.15, 28.83, 39.44) & 2.59 & (0.00, \textbf{0.00}, 0.00) & 1.09 \\
tai75e04 & (2.66, 145.42, 285.63) & 18.49 & (1.29, 13.45, 20.29) & 11.23 & (19.00, 27.71, 37.71) & 2.58 & (0.00, \textbf{0.01}, 0.15) & 1.14 \\
tai75e05 & (3.15, 89.91, 278.31) & 18.49 & (2.86, 16.30, 24.08) & 11.00 & (17.11, 24.36, 35.36) & 2.40 & (0.00, \textbf{0.00}, 0.00) & 1.11 \\
tai75e06 & (3.32, 48.85, 311.87) & 18.48 & (3.59, 20.30, 30.60) & 11.39 & (23.17, 40.10, 47.44) & 2.73 & (0.00, \textbf{0.00}, 0.00) & 0.77 \\
tai75e07 & (8.78, 196.51, 324.15) & 18.48 & (5.08, 10.60, 15.53) & 11.19 & (19.78, 26.45, 36.98) & 2.38 & (0.00, \textbf{0.55}, 2.08) & 3.07 \\
tai75e08 & (10.96, 76.87, 293.53) & 18.49 & (6.75, 13.48, 19.24) & 11.04 & (14.51, 26.28, 36.46) & 2.27 & (0.00, \textbf{0.00}, 0.00) & 1.26 \\
tai75e09 & (1.58, 113.32, 275.30) & 18.48 & (1.79, 14.09, 22.13) & 11.31 & (19.83, 26.90, 33.08) & 2.56 & (0.00, \textbf{0.00}, 0.00) & 1.05 \\
tai75e10 & (5.90, 149.17, 288.37) & 18.47 & (7.64, 14.54, 19.48) & 11.20 & (14.01, 22.03, 34.58) & 2.47 & (0.00, \textbf{0.00}, 0.00) & 0.90 \\
tai75e11 & (3.27, 124.39, 294.39) & 18.47 & (13.39, 18.60, 24.81) & 11.06 & (13.73, 26.45, 37.61) & 2.57 & (0.00, \textbf{0.00}, 0.00) & 0.84 \\
tai75e12 & (12.48, 154.06, 295.78) & 18.47 & (6.60, 13.80, 17.95) & 11.39 & (25.02, 30.11, 34.25) & 2.41 & (0.00, \textbf{0.20}, 2.04) & 1.89 \\
tai75e13 & (5.99, 90.61, 282.66) & 18.48 & (5.62, 13.19, 22.99) & 11.10 & (11.39, 25.02, 38.85) & 2.56 & (0.00, \textbf{0.31}, 3.15) & 1.60 \\
tai75e14 & (4.25, 19.70, 49.03) & 18.48 & (8.89, 16.01, 22.82) & 10.92 & (20.41, 29.04, 36.85) & 2.57 & (0.00, \textbf{0.00}, 0.00) & 1.21 \\
tai75e15 & (3.47, 145.10, 286.22) & 18.50 & (3.79, 15.90, 27.65) & 11.32 & (17.24, 27.54, 33.59) & 2.59 & (0.00, \textbf{0.00}, 0.00) & 0.65 \\
tai75e16 & (19.85, 207.84, 292.96) & 18.50 & (11.17, 16.68, 22.50) & 11.14 & (16.56, 22.94, 30.36) & 2.44 & (0.00, \textbf{0.00}, 0.00) & 0.93 \\
tai75e17 & (2.29, 140.05, 272.35) & 18.47 & (3.91, 17.74, 29.86) & 11.12 & (7.93, 27.85, 42.62) & 2.63 & (0.00, \textbf{0.00}, 0.00) & 0.67 \\
tai75e18 & (8.33, 70.83, 296.13) & 18.48 & (7.23, 15.67, 18.74) & 11.40 & (17.60, 27.43, 35.41) & 2.58 & (0.00, \textbf{0.51}, 5.14) & 1.47 \\
tai75e19 & (4.38, 148.63, 332.59) & 18.51 & (1.87, 17.60, 24.98) & 11.37 & (16.17, 29.39, 36.27) & 2.34 & (0.00, \textbf{0.00}, 0.00) & 0.91 \\
tai75e20 & (1.11, 41.99, 289.84) & 18.51 & (4.17, 16.33, 30.98) & 11.19 & (12.16, 23.94, 30.26) & 2.50 & (0.00, \textbf{0.00}, 0.00) & 1.13 \\
tai125e01 & (11.43, 45.68, 298.57) & 72.52 & (13.92, 20.44, 24.68) & 66.04 & (17.79, 27.42, 33.23) & 9.35 & (2.03, \textbf{3.45}, 5.79) & 8.79 \\
tai125e02 & (6.92, 106.10, 318.56) & 72.58 & (11.93, 17.76, 21.82) & 64.02 & (21.58, 25.23, 30.64) & 9.32 & (0.53, \textbf{3.15}, 5.66) & 8.79 \\
tai125e03 & (8.68, 76.74, 308.06) & 72.48 & (20.91, 23.69, 27.42) & 64.91 & (21.30, 29.87, 36.27) & 9.19 & (1.61, \textbf{4.47}, 8.73) & 8.71 \\
tai125e04 & (1.94, 11.68, 20.43) & 72.53 & (9.85, 15.76, 23.32) & 66.49 & (23.30, 29.17, 38.79) & 9.47 & (-1.20, \textbf{1.06}, 4.71) & 7.03 \\
tai125e05 & (-0.04, 60.94, 272.41) & 72.41 & (1.79, 10.53, 19.84) & 65.81 & (19.70, 24.75, 31.86) & 8.00 & (0.04, \textbf{2.50}, 4.79) & 8.55 \\
tai125e06 & (11.51, 98.77, 292.30) & 72.50 & (4.97, 15.97, 22.21) & 65.33 & (19.27, 24.05, 28.30) & 7.90 & (2.06, \textbf{5.64}, 8.87) & 8.73 \\
tai125e07 & (10.22, 46.77, 284.53) & 72.45 & (12.03, 16.52, 19.18) & 66.42 & (21.61, 30.61, 34.91) & 8.06 & (-1.80, \textbf{3.32}, 8.56) & 7.84 \\
tai125e08 & (9.30, 112.84, 265.66) & 72.50 & (9.10, 13.68, 18.88) & 65.52 & (16.88, 24.99, 31.88) & 8.28 & (1.17, \textbf{4.56}, 7.13) & 8.65 \\
tai125e09 & (10.25, 72.13, 300.55) & 72.52 & (14.03, 17.70, 21.53) & 66.72 & (23.70, 27.89, 33.86) & 9.02 & (1.21, \textbf{4.10}, 7.64) & 8.87 \\
tai125e10 & (11.51, 106.42, 302.24) & 72.59 & (16.21, 18.40, 21.61) & 65.67 & (20.32, 27.32, 33.29) & 8.48 & (2.05, \textbf{3.56}, 4.88) & 8.97 \\
tai125e11 & (4.66, 118.39, 284.63) & 72.60 & (11.38, 14.64, 16.93) & 65.85 & (19.76, 27.42, 32.06) & 8.38 & (3.07, \textbf{6.14}, 7.77) & 8.93 \\
tai125e12 & (16.09, 111.32, 317.21) & 72.53 & (16.92, 22.47, 26.50) & 63.78 & (23.71, 29.60, 38.10) & 8.02 & (2.26, \textbf{4.70}, 7.66) & 9.02 \\
tai125e13 & (6.73, 117.73, 273.36) & 72.61 & (7.90, 15.69, 25.32) & 66.33 & (24.20, 28.02, 34.10) & 7.82 & (2.62, \textbf{3.39}, 4.41) & 8.91 \\
tai125e14 & (7.23, 100.60, 299.06) & 72.55 & (16.03, 20.46, 25.34) & 69.27 & (22.97, 32.20, 36.79) & 9.16 & (1.20, \textbf{3.99}, 7.13) & 9.14 \\
tai125e15 & (5.90, 125.11, 300.54) & 72.50 & (4.03, 13.48, 18.91) & 65.44 & (19.74, 25.21, 29.43) & 8.47 & (-1.13, \textbf{2.35}, 5.75) & 7.53 \\
tai125e16 & (3.94, 114.62, 269.72) & 72.52 & (3.82, 9.79, 16.49) & 66.36 & (12.41, 18.97, 23.78) & 8.21 & (-2.52, \textbf{3.78}, 6.14) & 8.69 \\
tai125e17 & (3.49, 17.88, 34.90) & 72.49 & (7.96, 17.13, 24.73) & 64.71 & (23.08, 26.79, 31.88) & 8.30 & (1.51, \textbf{2.42}, 3.30) & 9.03 \\
tai125e18 & (5.01, 84.94, 264.67) & 72.54 & (3.64, 10.13, 14.62) & 66.25 & (16.86, 22.36, 29.61) & 8.17 & (0.26, \textbf{2.13}, 3.69) & 9.04 \\
tai125e19 & (10.28, 74.85, 301.27) & 72.51 & (13.41, 18.36, 22.80) & 68.34 & (26.38, 30.59, 34.97) & 8.73 & (0.84, \textbf{5.34}, 8.76) & 8.95 \\
tai125e20 & (8.04, 47.04, 302.06) & 72.50 & (11.40, 17.61, 22.33) & 67.02 & (18.23, 26.13, 29.70) & 8.10 & (-0.23, \textbf{2.96}, 5.02) & 8.83 \\
tai175e01 & (8.04, 136.55, 265.64) & 158.69 & (18.56, 21.96, 26.15) & 195.71 & (19.22, 24.06, 32.44) & 17.62 & (6.76, \textbf{8.69}, 12.14) & 14.34 \\
tai175e02 & (9.88, 47.83, 277.88) & 158.74 & (20.58, 24.50, 30.61) & 194.60 & (18.68, 25.77, 30.26) & 16.58 & (9.31, \textbf{11.76}, 16.08) & 14.48 \\
tai175e03 & (15.32, 164.24, 305.14) & 158.76 & (18.61, inf, inf) & 200.77 & (23.00, 27.93, 32.75) & 17.55 & (6.30, \textbf{8.17}, 10.21) & 14.18 \\
tai175e04 & (7.00, 60.56, 248.45) & 158.69 & (16.85, 19.66, 25.17) & 188.51 & (9.38, 17.93, 22.35) & 17.09 & (4.55, \textbf{7.00}, 9.01) & 14.42 \\
tai175e05 & (13.93, 52.22, 297.51) & 158.71 & (16.93, 24.48, 33.26) & 193.30 & (24.95, 27.94, 31.82) & 16.35 & (7.83, \textbf{9.97}, 13.27) & 14.45 \\
tai175e06 & (11.32, 42.79, 272.55) & 158.70 & (19.83, 25.42, 29.35) & 189.75 & (21.06, 25.38, 30.60) & 16.68 & (7.42, \textbf{9.21}, 10.47) & 14.37 \\
tai175e07 & (11.09, 71.72, 279.85) & 158.75 & (15.87, 25.24, 29.30) & 196.00 & (22.78, 25.49, 28.59) & 16.58 & (6.54, \textbf{10.12}, 13.34) & 14.48 \\
tai175e08 & (6.15, 40.32, 262.51) & 158.65 & (11.93, 19.34, 24.77) & 195.71 & (16.25, 22.98, 26.68) & 16.51 & (9.31, \textbf{12.14}, 15.28) & 14.62 \\
tai175e09 & (-0.03, 57.54, 248.44) & 147.71 & (10.96, 17.19, 21.14) & 192.34 & (13.23, 18.37, 22.17) & 16.75 & (11.52, \textbf{13.28}, 16.97) & 14.33 \\
tai175e10 & (7.47, 49.93, 270.09) & 158.71 & (19.37, 23.77, 30.48) & 192.17 & (17.61, 24.59, 28.04) & 17.05 & (5.84, \textbf{11.09}, 15.47) & 14.06 \\
tai175e11 & (10.39, 93.45, 270.99) & 158.84 & (16.66, 21.36, 24.66) & 193.03 & (17.87, 22.62, 29.26) & 16.59 & (4.64, \textbf{11.05}, 17.63) & 14.78 \\
tai175e12 & (11.51, 42.96, 263.97) & 158.79 & (11.97, inf, inf) & 201.86 & (11.86, 20.83, 27.91) & 16.80 & (3.17, \textbf{5.32}, 7.09) & 14.65 \\
tai175e13 & (10.38, 17.83, 29.28) & 158.69 & (11.08, 18.74, 23.28) & 197.75 & (18.63, 22.93, 27.17) & 16.71 & (4.60, \textbf{8.12}, 10.55) & 14.14 \\
tai175e14 & (2.43, 83.00, 255.31) & 158.63 & (10.31, 19.67, 27.48) & 196.02 & (9.50, 20.07, 27.54) & 16.71 & (3.97, \textbf{7.94}, 10.15) & 14.41 \\
tai175e15 & (13.36, 43.07, 274.57) & 158.77 & (10.55, 17.51, 25.60) & 195.31 & (18.11, 22.42, 30.63) & 17.21 & (6.61, \textbf{9.56}, 12.55) & 14.37 \\
tai175e16 & (7.68, 106.06, 311.13) & 158.80 & (15.71, inf, inf) & 195.51 & (19.09, 25.37, 30.55) & 16.85 & (5.09, \textbf{7.63}, 10.25) & 14.68 \\
tai175e17 & (8.60, 65.70, 266.28) & 158.71 & (14.75, 20.46, 25.81) & 185.47 & (18.13, 21.64, 23.99) & 16.53 & (3.66, \textbf{5.47}, 7.82) & 14.42 \\
tai175e18 & (12.11, 42.13, 266.07) & 158.66 & (10.61, 13.94, 16.99) & 189.47 & (19.35, 22.05, 24.82) & 17.27 & (4.13, \textbf{9.35}, 11.77) & 14.56 \\
tai175e19 & (6.29, 95.24, 271.39) & 158.82 & (16.01, 20.63, 31.39) & 193.72 & (17.88, 23.23, 27.26) & 16.66 & (3.78, \textbf{7.91}, 11.83) & 14.52 \\
tai175e20 & (9.31, 44.14, 282.49) & 158.73 & (15.96, 21.38, 27.30) & 196.64 & (17.47, 24.71, 27.46) & 17.13 & (4.11, \textbf{8.02}, 12.96) & 14.31 \\

\end{longtable}
}

\end{document}